\title{Active, anytime-valid risk controlling prediction sets}
\author{Ziyu Xu\thanks{Dept. of Statistics and Data Science, Carnegie Mellon University. Email: \texttt{xzy@cmu.edu}},\ \ Nikos Karampatziakis\thanks{Microsoft. Email: \texttt{nikosk@microsoft.com}},\ \ Paul Mineiro\thanks{Microsoft. Email: \texttt{pmineiro@microsoft.com}}}
\date{\today}
\begin{document}
\maketitle
\begin{abstract}
    Rigorously establishing the safety of black-box machine learning models concerning critical risk measures is important for providing guarantees about model behavior.
Recently, Bates et. al. (JACM '24) introduced the notion of a \emph{risk controlling prediction set (RCPS)} for producing prediction sets that are statistically guaranteed low risk from machine learning models.
Our method extends this notion to the sequential setting, where we provide guarantees even when the data is collected adaptively, and ensures that the risk guarantee is anytime-valid, i.e., simultaneously holds at all time steps. Further, we propose a framework for constructing RCPSes for active labeling, i.e., allowing one to use a labeling policy that chooses whether to query the true label for each received data point and ensures that the expected proportion of data points whose labels are queried are below a predetermined label budget. We also describe how to use predictors (i.e., the machine learning model for which we provide risk control guarantees) to further improve the utility of our RCPSes by estimating the expected risk conditioned on the covariates.
We characterize the optimal choices of label policy and predictor under a fixed label budget and show a regret result that relates the estimation error of the optimal labeling policy and predictor to the wealth process that underlies our RCPSes.
Lastly, we present practical ways of formulating label policies and empirically show that our label policies use fewer labels to reach higher utility than naive baseline labeling strategies on both simulations and real data.

 \end{abstract}
\pagebreak
\tableofcontents

\section{Introduction}

One of the core problems of modern deep learning systems is the lack of rigorous statistical guarantees one can ensure about the performance of a model in practice.
In particular, we are interested in ensuring the safety of a deep learning system, that is, it does not incur undue risk while optimizing for some other objective of interest. This type of guarantee arises in many applications.
For example, a deep learning based medical imaging segmentation system that detects lesions   \citep{wu_deep_morphology_2019,flores_leveraging_machine_2021,saba_multimodality_carotid_2021} should guarantee that it does not miss most of the lesion tissue while remaining precise and minimizing the total amount of tissue that is highlighted.
Hence, it is crucial to provide a statistical guarantee about the ``safety'' of any machine learning system to be deployed. \citet{bates_distribution-free_risk-controlling_2021a} introduced the notion of a \emph{risk controlling prediction set} as a method to derive such guarantees on top of the outputs of a wide range of black-box models.
They consider the setting where all the calibration data are available as a single dataset before deployment and the model only needs to be calibrated once, i.e., the batch setting. However, this is often unrealistic in a production setup when we have no data concerning the performance of a model before we deploy it, and we wish to update our calibration each time a new data point (or group of data points) arrives. Consequently, it is natural to calibrate the machine learning model in an \textit{online} fashion while receiving new data sequentially. Unfortunately, methods for obtaining statistical guarantees in the batch setting of \citet{bates_distribution-free_risk-controlling_2021a} do not ensure risk control guarantees in the sequential regime. Further, the consideration of using data from production also brings up the concern that this data is often unlabeled, and one must expend resources (either paying experts or utilizing a more powerful model) to label these data points.
Hence, we also consider an active setting in which we see the covariate $X$ and choose whether to query the true label $Y$.
Consider the following scenarios where an active and sequential method is relevant.

\begin{itemize}[leftmargin=*]
    \item \textit{Reduce query cost in medical imaging.} As discussed prior, a medical imaging system that outputs scores for for each pixel of image that determines whether there is a lesion or not would want to utilize labels given by medical experts for unlabeled images from new patients. Since the cost of asking experts to label these images is quite high, one would want to query experts efficiently, and only on data that would be most helpful for reducing the number of highlighted pixels.
    \item \textit{Domain adaptation for behavior prediction.} One reason we would want online calibration in a production setting is that we may have much different distribution of data that we do not have access to before deployment. For example, during a navigation task for a robot, we may want to predict the actions of other agents and avoid colliding into them when travelling between two points \citep{lekeufack_conformal_decision_2024}. Since agents may behave differently in every environment, it makes sense to collect the behavior data in the test environment and update the behavior prediction in an online fashion to get accurate predictions calibrated for specifically the test environment.
    \item \textit{Safe outputs for large language models (LLMs).} One of the goals with large language models is to ensure their responses are not harmful in some fashion (e.g., factually wrong, toxic, etc.). One can view this as outputting a prediction set for the binary label set of $Y \in \{\texttt{harmful},\ \texttt{not harmful}\}$. Many pipelines for modern LLMs include some form of a safety classifier, which scores the risk level of an output, and determines whether it should be output to the user or not \citep{markov_holistic_approach_2023,Detoxify}, or a default backup response should be used instead. One would want to label production data acquired from user interaction with the LLM and used to calibrate cutoff for the scores that are considered low enough for the response to be allowed through.
\end{itemize}

\paragraph{Example: image classification} Let us assume we wish to classify an image $X \in \Xcal$, and we have access to a probabilistic classifier $s: \Xcal \rightarrow \Delta^{\Ycal}$ where $\Delta^{\Ycal}$ is the probability simplex over distributions over all possible classes, $\Ycal$. Let $s^y(x)$ denote the probability of class $y$ in the distribution $s(x)$. Based on the probabilities from $s(X)$, we can define $C(X, \beta)$ to have the labels with the largest probabilities that sum to $\beta \in [0, 1]$ in the following fashion:
\begin{align}
    \gamma(X, \beta) &\coloneqq \max\left\{\gamma \in [0, 1]: \sum_{y \in \Ycal}\ind{s^y(X) \geq \gamma}\cdot s^y(X) \geq \beta\right\},\\
    C(X, \beta) &\coloneqq  \{y \in \Ycal: s^y(X) \geq \gamma(X, \beta)\}
\end{align} Now, we can define the miscoverage error of our label set $C(X, \beta)$ as follows:
\begin{align}
    r(X, Y, \beta) \coloneqq \ind{Y \not\in C(X, \beta)}. \label{eq:imagenet-risk}
\end{align}
Now, assume that $(X, Y) \sim \Pcal^*$, i.e., the images and class are jointly drawn from a fixed distribution. We want to find a choice of $\beta$ such that $\rho(\beta) \coloneqq \expect[r(X, Y, \beta)]$ is guaranteed to be at most $\theta \in [0, 1]$, i.e., the expected miscoverage over the population of images and labels is at most $\theta$.

In the above image classification example, we do not simply wish to find any $\beta$ that ensures $\rho(\beta) \leq \theta$ --- setting $\beta = 1$ would trivially ensure this guarantee for any $\theta \in [0, 1]$. We also want to minimize the size of our uncertainty set $C(X, \beta)$. To present this formulation in more general terms, we are interested in solving the following problem for a fixed level of risk control $\theta \in [0, 1]$:
\begin{gather}
 \max_{\beta} g(\beta)\ \qquad \text{subject to } \rho(\beta) \leq \theta. \label{eq:basic-problem}
\end{gather}
where $g$ is the utility of our choice. We make the following natural assumption about $r$, $\rho$, and $g$.
\begin{assumption}\label{assumption:basics}
$g$ and $\rho$ are monotonically decreasing w.r.t.\ $\beta$ and we assume $\rho(1) = 0$. In addition, $\rho$ is right-continuous.
\end{assumption}
Our image classification example has an expected risk and utility that satisfy the respective monotonicity assumptions, and such risk measures arise in many applications such as natural language question answering \citep{quach_conformal_language_2023}, image segmentation \citep{angelopoulos_image-to-image_regression_2022}, and behavior control for robotics \citep{lekeufack_conformal_decision_2024,huang_conformal_policy_2023}. \Cref{assumption:basics} implies that maximizing $g(\beta)$ is equivalent to minimizing $\beta$, as $g$ is decreasing in $\beta$, and the right-continuity of $\rho$ allows us to define the notion of an optimal calibration parameter that is the solution to \eqref{eq:basic-problem}:
\begin{gather}
    \beta^* \coloneqq \min\ \{\beta \in [0, 1]: \rho(\beta) \leq \theta\}.
\end{gather}
Our goal in this paper is to derive a sequence of upper bounds on $\beta^*$ that quickly approach the true $\beta^*$ but are ``anytime safe'' in the sense that they are always greater the $\beta^*$ and induce risk under $\theta$, i.e., $\beta \geq \beta^*$ implies that $\rho(\beta) \leq \rho(\beta^*) \leq \theta$. Since we are guaranteed by \Cref{assumption:basics} that $\rho(1) = 0 \leq \theta$, we always have a safe option of $\beta = 1$ to start with as our upper bound.

\textbf{Our contributions.} The primary contributions of this paper are as follows.\
\begin{enumerate}[leftmargin=*]
    \item \emph{Extensions of RCPS to anytime-valid and active settings.}  We extend the notion of RCPS in two ways: (1) to enable anytime-valid RCPS which allows one to refine the set as one receives more samples in a stream while maintaining risk control throughout the entire stream, and (2) to define an RCPS that is valid under active learning, i.e., enable us to decide whether to label each example based on the covariates. We also define a way for incorporating risk predictions from the machine learning model to decrease variance further and reduce the number of labels needed to estimate $\beta^*$. We formulate this betting framework in \Cref{sec:betting}.

\item \emph{Deriving powerful labeling policies and predictors.} We show in \Cref{sec:optimal-policies} that our active, anytime-valid RCPS methods are practically powerful and converge to $\beta^*$ in a label efficient manner by also deriving formulations for the optimal labeling policy and predictors under the standard log-optimality criterion that is used for evaluating anytime-valid methods \citep{grunwald_safe_testing_2024,waudby2020estimating,larsson_numeraire_e-variable_2024}. We derive explicit regret bounds w.r.t.\ a lower bound on the growth rate on the wealth processes that underlie our RCPS methods. These bounds characterize how the deviation of any labeling policy and predictor from the log-optimal policy and predictor affect the growth rate of our wealth processes (and hence the earliest time at which a candidate $\beta$ is removed from consideration as $\beta^*$). In \Cref{sec:experiments} we also show that machine learning model based estimators of the optimal policy and predictors are label efficient in practice through experiments.
\end{enumerate}

\textbf{Related work.} Most relevant to this paper is the recent work from \citet{zrnic_active_statistical_2024} that provides a rigorous framework for statistical inference with active labeling policies, and leverages machine learning predictions through prediction-powered inference \citep{angelopoulos_prediction-powered_inference_2023a}. However, their focus is on M-estimation and deriving asymptotic, martingale central-limit theorem based results for a parameter. On the other hand, we provide finite sample anytime-valid results that are also valid at adaptive stopping times that directly utilize e-process \citep{ramdas_game-theoretic_statistics_2023} construction of sequential tests. Further, our goal is to provide a time-uniform statistical guarantee in the RCPS framework rather than directly estimating a parameter with adaptively collected data. We discuss additional related work in depth in \Cref{sec:related-work}.

\section{Anytime-valid risk control through betting}\label{sec:betting}

\begin{figure}[h]
    \centering
    \includegraphics[width=\textwidth]{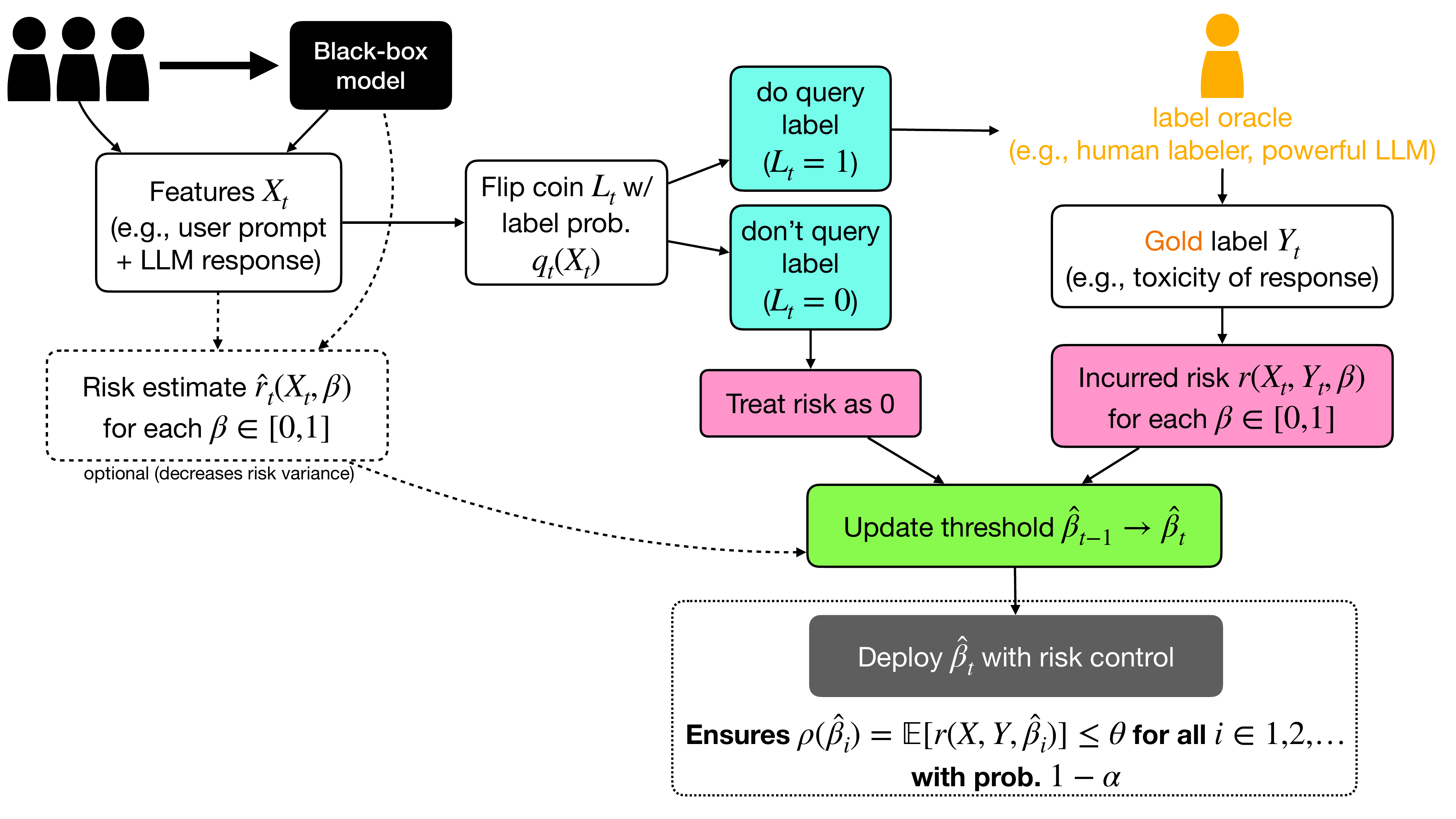}
    \caption{Diagram of the active labeling setup for ensuring anytime-valid risk control.}
    \label{fig:prob-setup}
\end{figure}
We use $(X_t)_{t \in \mathbb{I}}$ to denote a sequence that is indexed by $t$ with index set $\mathbb{I}$. If the index set or indexing variable is apparent in context, we drop it for brevity. In our setup, we assume that our data points arrive in a stream $(X_1, Y_1), (X_2, Y_2), \dots$ that proceeds indefinitely. Let $(\filtration_t)$ be the canonical filtration on the data, i.e., $\filtration_t \coloneqq \sigma(\{(X_i, Y_i)\}_{i \leq [t]}$) is the sigma-algebra over the first $t$ points. Recall that we assumed $(X_t, Y_t) \sim \Pcal^*$ are i.i.d.\ draws for each $t \in \naturals \coloneqq \{1, 2, 3, \dots\}$, and want to control the risk $\rho(\beta) = \expect_{(X, Y) \sim \Pcal^*}[r(X, Y, \beta)]$ where the expectation is take only over $(X, Y)$. We illustrate an overview of our methodology (which we describe in the sequel) in \Cref{fig:prob-setup}.

We desire to output a sequence of calibration parameters, $(\widehat\beta_t)$, such that every $\beta_t$ is ``safe'', i.e., ensures that the resulting risk of the output is provably controlled under a fixed level.

\begin{definition}
A sequence of calibration parameters $(\widehat\beta_t)$ is said to have $(\theta, \alpha)$-\emph{anytime-valid risk control} if it possesses the following property:
\begin{align}\label{eq:anytime-valid-safety}
    \pr(\rho(\widehat{\beta}_t) \leq \theta \text{ for all }t \in \naturals) \geq 1 - \alpha.
\end{align}\end{definition}
We name this as ``anytime-valid'' since the risk control condition (i.e., $\rho(\widehat{\beta}_t) \leq \theta$) is guaranteed to hold simultaneously at all $t \in \naturals$. Hence, this allows for the user to process a continuous stream of data and control the probability that a $\widehat\beta_t$ is chosen at any time $t$ that is ``unsafe'', i.e., $\rho(\widehat\beta_t) > \theta$.
We build on recent work that develops a framework for hypothesis testing and parameter estimation with sequential data collection based on martingales and gambling with virtual wealth known as \emph{testing by betting} \citep{shafer_testing_betting_2021}. In this framework, the goal is to design an \emph{e-process}, $(E_t)$, w.r.t.\  a null hypothesis $H_0$, which satisfies the following properties when true:
\begin{definition}
    An \emph{e-process}, $(E_t)_{t \in \naturals_0}$, w.r.t.\  a hypothesis $H_0$,  is a nonnegative process for which there exists another nonnegative process, $(M_t)_{t \in \naturals_0}$ s.t.\ the following is true when $H_0$ is true:
(1) $\expect[M_0] \leq 1$,
(2) $M_t \geq E_t$ for all $t \in \naturals$ almost surely and (3)
$\expect[M_t \mid \filtration_{t - 1}]  \leq M_{t - 1}$ for all $t \in \naturals$,, i.e., $(M_t)$ is a supermartingale.
\end{definition}
E-processes will be the main tool we use to construct $(\widehat\beta_t)$.
We leverage the probabilistic bound on e-processes provided by Ville's inequality to prove our anytime-valid risk control guarantee.
\begin{fact}[Ville's inequality \citep{ville_etude_critique_1939}]
    For any e-process, $(E_t)$, with initial expectation bounded by 1, i.e., $\expect[E_0]\leq 1$, we have that
    \begin{align}
        \prob{\text{exists } t \in \naturals: M_t \geq \alpha^{-1}} \leq \alpha\text{ for each }\alpha \in [0, 1].
    \end{align}\ifarxiv{}{\vspace{-15pt}}
\end{fact}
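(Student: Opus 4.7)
The plan is to prove Ville's inequality by leveraging the dominating nonnegative supermartingale $(M_t)$ from the definition of an e-process and applying the optional stopping theorem. Since the statement as written concerns $M_t$ directly (and the analogous bound on $E_t$ would follow from $E_t \leq M_t$ almost surely), it suffices to show $\pr(\exists t: M_t \geq \alpha^{-1}) \leq \alpha$.

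First I would introduce the stopping time
\[
\tau \coloneqq \inf\{t \in \naturals_0 : M_t \geq \alpha^{-1}\},
\]
with the convention $\inf \emptyset = \infty$. For every finite $n$, the truncation $\tau \wedge n$ is a bounded stopping time, so applying the optional stopping theorem to the nonnegative supermartingale $(M_t)_{t \in \naturals_0}$ yields
\[
\expect[M_{\tau \wedge n}] \leq \expect[M_0] \leq 1.
\]
On the event $\{\tau \leq n\}$ we have $M_{\tau \wedge n} = M_\tau \geq \alpha^{-1}$ by definition of $\tau$, and on the complement $M_{\tau \wedge n} \geq 0$ by nonnegativity. Splitting the expectation over these two events gives
\[
1 \geq \expect[M_{\tau \wedge n}] \geq \alpha^{-1}\,\pr(\tau \leq n),
\]
hence $\pr(\tau \leq n) \leq \alpha$ for every $n$.

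Finally, I would take $n \to \infty$. The events $\{\tau \leq n\}$ are monotone increasing with union $\{\tau < \infty\} = \{\exists t \in \naturals_0 : M_t \geq \alpha^{-1}\}$, so continuity of probability from below gives $\pr(\tau < \infty) \leq \alpha$, which is the desired conclusion. (The statement in the excerpt ranges over $t \in \naturals$; the bound for this slightly smaller index set follows trivially.)

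The argument is almost entirely mechanical, so the ``main obstacle'' is purely bookkeeping rather than mathematical: one must verify that optional stopping is applicable to $(M_t)$ at the bounded stopping time $\tau \wedge n$ (which is immediate for nonnegative supermartingales indexed by $\naturals_0$) and that the supremum event is recoverable as a monotone limit. Both are standard properties of discrete-time nonnegative supermartingales and require no additional hypotheses beyond those already in the definition of an e-process.
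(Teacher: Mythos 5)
Your argument is correct: this is the standard proof of Ville's inequality via optional stopping of the dominating nonnegative supermartingale at the truncated hitting time $\tau \wedge n$, followed by a Markov-type split and monotone convergence of the events $\{\tau \leq n\}$. The paper states this result as a cited Fact without proof, so there is nothing to compare against beyond noting that your derivation is the canonical one and correctly uses $\expect[M_0] \leq 1$ (rather than the statement's typo $\expect[E_0] \leq 1$) together with $E_t \leq M_t$ to cover both processes.
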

In this paper, all our e-processes will also be nonnegative supermartingales, so we denote them as $(M_t)$.
Now, we will specify the null hypotheses in our risk control setting. For each $\beta \in [0, 1]$, we test the null hypothesis $H_0^\beta: \rho(\beta) \geq \theta$ for a fixed risk control level $\theta \in [0, 1]$. Note that we include equality to $\theta$ in the null hypothesis since we do not wish to reject $H_0^{\beta^*}$. Let $\{(M_t(\beta))\}_{\beta \in [0, 1]}$ be a family of e-processes where $(M_t(\beta))$ is an e-process for $H_0^\beta$.
Then, we can derive $\widehat\beta_t$ for each $t \in \naturals$ as follows:
\begin{align}
\widehat\beta_t \coloneqq \min\ \{\beta \in [0, 1]: M_t(\beta') \geq 1 / \alpha \text{ for all }\beta' > \beta\}.
\label{eq:simple-beta-hat}
\end{align}

\begin{theorem}\label{thm:simple-safety}
    The sequence of estimates $(\widehat{\beta}_t)$ in \eqref{eq:simple-beta-hat} satisfies the anytime-valid risk control guarantee \eqref{eq:anytime-valid-safety}, i.e., $\pr(\rho(\widehat\beta_t) \leq \theta\text{ for all }t \in \naturals) \geq 1 - \alpha$.
\end{theorem}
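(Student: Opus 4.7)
The plan is to reduce the safety property \eqref{eq:anytime-valid-safety} to a statement about $\widehat{\beta}_t$ versus the true optimum $\beta^* = \min\{\beta : \rho(\beta) \leq \theta\}$, and then to invoke Ville's inequality at parameters strictly below $\beta^*$ where the null is forced to hold. By the monotonicity and right-continuity of $\rho$ in \Cref{assumption:basics}, the set $\{\beta \in [0,1] : \rho(\beta) \leq \theta\}$ equals $[\beta^*, 1]$, so the event $\{\rho(\widehat{\beta}_t) \leq \theta \text{ for all } t \in \naturals\}$ coincides with $\{\widehat{\beta}_t \geq \beta^* \text{ for all } t \in \naturals\}$. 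The claim therefore reduces to showing $\pr(\exists t \in \naturals : \widehat{\beta}_t < \beta^*) \leq \alpha$.

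Next I would observe that the set $S_t \coloneqq \{\beta \in [0, 1] : M_t(\beta') \geq 1/\alpha \text{ for all } \beta' > \beta\}$ is upward-closed, and also closed from below (by taking limits along a decreasing sequence in $S_t$), so the minimum in \eqref{eq:simple-beta-hat} is attained and $\widehat{\beta}_t \in S_t$. Consequently, whenever $\widehat{\beta}_t < \beta$ for some $\beta \in [0,1]$, we must have $M_t(\beta) \geq 1/\alpha$. On the other hand, by the definition of $\beta^*$ as the minimum of $\{\beta : \rho(\beta) \leq \theta\}$, every $\beta < \beta^*$ satisfies $\rho(\beta) > \theta$, so the null $H_0^\beta$ is true; hence the supermartingale property of $(M_t(\beta))$ together with Ville's inequality yields $\pr(\exists t \in \naturals : M_t(\beta) \geq 1/\alpha) \leq \alpha$.

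Combining the two ingredients, for each sufficiently small $\epsilon > 0$ we obtain
\[
\pr(\exists t : \widehat{\beta}_t < \beta^* - \epsilon) \;\leq\; \pr(\exists t : M_t(\beta^* - \epsilon) \geq 1/\alpha) \;\leq\; \alpha.
\]
The events on the left-hand side are nested and increase as $\epsilon \downarrow 0$, and their union (along, e.g., rational $\epsilon$) equals $\{\exists t : \widehat{\beta}_t < \beta^*\}$. Monotone continuity of probability then gives $\pr(\exists t : \widehat{\beta}_t < \beta^*) \leq \alpha$, which is the required bound.

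The main obstacle is the edge case in which $\rho$ jumps across $\theta$ at $\beta^*$, so that $\rho(\beta^*) < \theta$ strictly and $H_0^{\beta^*}$ is actually false --- this rules out a naive direct application of Ville's inequality at $\beta = \beta^*$ itself. The approximation-from-below step is precisely what bypasses this: Ville's is only invoked at parameters $\beta < \beta^*$, where the null is guaranteed true by the very definition of $\beta^*$, and the final limit $\epsilon \downarrow 0$ recovers the bound at $\beta^*$.
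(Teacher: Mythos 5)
Your proof is correct, and in one important respect it is more careful than the paper's own argument. Both proofs share the same skeleton: use \Cref{assumption:basics} to identify $\{\rho(\widehat\beta_t)\le\theta\ \forall t\}$ with $\{\widehat\beta_t\ge\beta^*\ \forall t\}$, observe from \eqref{eq:simple-beta-hat} that $\widehat\beta_t<\beta$ forces $M_t(\beta)\ge 1/\alpha$, and then invoke Ville's inequality. The difference is where Ville is applied. The paper applies it directly at $\beta^*$, asserting that $H_0^{\beta^*}:\rho(\beta^*)\ge\theta$ ``is always true by definition of $\beta^*$''; but the definition only gives $\rho(\beta^*)\le\theta$, and under mere monotonicity and right-continuity of $\rho$ one can have $\rho(\beta^*)<\theta$ strictly (e.g.\ $\rho(\beta)=\ind{\beta<1/2}$ with $\theta=0.1$), in which case $(M_t(\beta^*))$ is not a supermartingale and the paper's chain of inequalities breaks, even though the theorem itself remains true. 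Your route --- invoking Ville only at $\beta^*-\epsilon$, where $\rho(\beta^*-\epsilon)>\theta$ is guaranteed by the minimality of $\beta^*$, and then passing to the limit over the nested events $\{\exists t:\widehat\beta_t<\beta^*-\epsilon\}$ by continuity from below --- closes exactly this gap and proves the statement under \Cref{assumption:basics} alone. The two arguments coincide whenever $\rho$ is continuous at $\beta^*$ (so that $\rho(\beta^*)=\theta$); what your version buys is validity in the jump case, at the modest cost of the limiting step and the (correct) observation that the minimum in \eqref{eq:simple-beta-hat} is attained because the feasible set is upward-closed and closed from below.
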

\begin{proof}
    First, we note that
    \begin{align}
        \{\exists t \in \naturals: \rho(\widehat{\beta}_t) > \theta \}\Leftrightarrow \{\exists t \in \naturals: \widehat{\beta}_t < \beta^* \} \Rightarrow \{\exists t \in \naturals: M_t(\beta^*) \geq 1 / \alpha\} .
    \end{align}
    Since $H_0^{\beta^*}$ is always true by definition of $\beta^*$, we get that $(M_t(\beta^*))$ is an e-process by \Cref{prop:null-e-process}. Thus, by applying Ville's inequality, we get that:
    \begin{align}
        \pr(\exists t \in \naturals: \rho(\widehat{\beta}_t) > \theta)
        \leq \prob{\exists  t \in \naturals: M_t(\beta^*) \geq 1 / \alpha} \leq \alpha.
    \end{align}
\end{proof}
\ifarxiv{}{\vspace{-10pt}}
Now, we will present a concrete example of an e-process. Denote $R_t(\beta) \coloneqq r(X_t, Y_t, \beta)$. We test $H_0^\beta$ using the betting e-process from \citet{waudby2020estimating}:
\begin{align}
    M_t(\beta) &= \prod_{i=1}^t \left(1 + \lambda_i \left(\theta - R_i(\beta)\right) \right), \label{eq:betting-e-process}
\end{align} where $(\lambda_t)$ is predictable w.r.t.\ $(\filtration_t)$, i.e., $\lambda_t$ can be determined by $\filtration_{t - 1}$ for each $t \in \naturals$, and $\lambda_t \in [0, (1 - \theta)^{-1}]$.
\begin{proposition}\label{prop:null-e-process}
    $(M_t(\beta))$ in \eqref{eq:betting-e-process} is an e-process for all $\beta$ where $H_0^\beta$ is true, i.e., where $\rho(\beta) \geq \theta$. \end{proposition}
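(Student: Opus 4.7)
The plan is to verify the three defining conditions of an e-process for $(M_t(\beta))$ directly, using $(M_t(\beta))$ itself as the dominating supermartingale $(M_t)$ in the definition. I would first dispense with nonnegativity: each factor satisfies $1+\lambda_i(\theta-R_i(\beta))\geq 1-\lambda_i(1-\theta)\geq 0$ because $R_i(\beta)\in[0,1]$ (as the risk $r$ is an indicator in the motivating example, and more generally bounded in $[0,1]$) and the predictable bet $\lambda_i$ is constrained to $[0,(1-\theta)^{-1}]$. The empty product convention gives $M_0(\beta)=1$, so $\expect[M_0(\beta)]=1$ and condition (1) is immediate.

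The main step is the supermartingale inequality. Since $\lambda_t$ is $\filtration_{t-1}$-measurable by predictability and $M_{t-1}(\beta)$ is $\filtration_{t-1}$-measurable by construction, I can pull both out of the conditional expectation:
\begin{align*}
\expect[M_t(\beta)\mid\filtration_{t-1}]
&= M_{t-1}(\beta)\cdot\bigl(1+\lambda_t\bigl(\theta-\expect[R_t(\beta)\mid\filtration_{t-1}]\bigr)\bigr).
\end{align*}
Because $(X_t,Y_t)\sim\Pcal^*$ is i.i.d.\ and independent of $\filtration_{t-1}$, the conditional mean of $R_t(\beta)=r(X_t,Y_t,\beta)$ collapses to the unconditional mean $\rho(\beta)$. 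Under the null $H_0^\beta$ we have $\rho(\beta)\geq\theta$, so $\theta-\rho(\beta)\leq 0$, and combined with $\lambda_t\geq 0$ this yields $\expect[M_t(\beta)\mid\filtration_{t-1}]\leq M_{t-1}(\beta)$, giving condition (3).

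Finally, taking $(M_t)$ in the e-process definition to be $(M_t(\beta))$ itself makes condition (2) trivial. I would note briefly that no obstacle of substance arises here: the argument is essentially the classical one of \citet{waudby2020estimating}, with the only point worth stating carefully being that the i.i.d.\ assumption on $(X_t,Y_t)$ is what lets us replace the conditional mean of $R_t(\beta)$ with $\rho(\beta)$. If the paper later relaxes that assumption, the same proof still works provided the data are only assumed to satisfy $\expect[R_t(\beta)\mid\filtration_{t-1}]\geq\theta$ under $H_0^\beta$, which is the natural sequential analog of the null.
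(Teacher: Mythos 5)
Your proof is correct and follows essentially the same route as the paper's: nonnegativity from the constraint $\lambda_i\in[0,(1-\theta)^{-1}]$ together with $R_i(\beta)\in[0,1]$, and the supermartingale property from predictability of $\lambda_t$ plus the i.i.d.\ assumption reducing $\expect[R_t(\beta)\mid\filtration_{t-1}]$ to $\rho(\beta)\geq\theta$ under $H_0^\beta$. You even state the null's direction correctly where the paper's own proof has a sign typo (it writes $\expect[R_t(\beta)]\leq\theta$ under $H_0^\beta$ when it means $\geq\theta$), so nothing further is needed.
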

\begin{proof}
    \label{sec:null-e-process-proof}
We note that $(M_t(\beta))$ is nonnegative by the support of $\lambda_t$ being limited, i.e.,
\begin{align}
1 + \lambda_t(\theta - r(X_t, Y_t, \beta)) \geq 1 + \lambda_t(\theta - 1) \geq 0.
\end{align}
Now, we will also show that $(M_t(\beta))$ is a supermartingale when $H_0^\beta$ is true.
\begin{align}
    \expect[M_t(\beta) \mid \filtration_{t  -1}]&= \expect[1 + \lambda_t(\theta - R_t(\beta)) \mid \filtration_{t - 1}] \cdot M_{t-1}(\beta)\\
     &= (1 + \lambda_t(\theta - \expect[R_t(\beta) \mid \filtration_{t - 1}])) \cdot M_{t-1}(\beta)
     \leq  M_{t-1}(\beta).
\end{align}
The second equality is because $\lambda_t$ is measurable w.r.t.\ $\filtration_{t - 1}$ and the last inequality is by $R_t(\beta)$ being independent of $\filtration_{t - 1}$ and $\expect[R_t(\beta)] \leq \theta$ being true under $H_0^\beta$. Thus, we have our desired result.
\end{proof}Now, we have a concrete way to derive $(\widehat\beta_t)$ that ensures the risk $\rho(\widehat\beta_t)$, is controlled at every time step $t \in \naturals$.
However, this requires one to label every example that arrives, i.e., it requires access to entire stream of labels $(Y_t)$. We will now derive a more label efficient way for constructing  $(\widehat\beta_t)$.
\begin{remark}
    \citet{ramdas_admissible_anytime-valid_2020} show that e-processes of the form in \eqref{eq:betting-e-process} characterize the set of admissible e-processes (and hence anytime-valid sequential tests) for testing the mean of bounded random variables. Hence, it is an optimal choice of e-process for our setting, and have been shown to perform better both theoretically and empirically than other sequential tests for bounded random variables (e.g., Hoeffding and empirical-Bernstein based tests \citep{waudby2020estimating}).
\end{remark}

\subsection{Active sampling for risk control}
Now, we describe active learning for risk control, where an algorithm sees $X_t$ decides whether a label, for the current point, $Y_t$, should be queried or not.
At each step $t \in \naturals$, the algorithm produces a label policy $q_t: \Xcal \rightarrow [q_t^{\min}, 1]$ based on the observed data (i.e., $\filtration_{t - 1}$).
It then queries the label, $Y_t$, with probability $q_t(X_t)$ that is lower bounded by a constant, $q_t^{\min}$. Let $L_t$ be the indicator random variable for whether the $t$th label is queried, i.e., $L_t \sim \text{Bern}(q_t(X_t))$.

To produce a label efficient method, one would hope to label the most ``impactful'' data points that result in the largest growth of $M_t(\beta)$ for choices of $\beta \in (0, \widehat\beta_t)$, i.e.  that are still in consideration for the next $\widehat\beta_{t + 1}$.
For the labeling policies we consider in this paper, we let $q^{\min}_t \in [0, 1]$ be a lower bound on the labeling probability, i.e., $q_t(X_t) \geq q^{\min}_t$ almost surely.
Thus, we can derive the following e-process for any sequence of labeling policies $(q_t)$.
\begin{align}
    M_t(\beta)\coloneqq \prod\limits_{i = 1}^t \left(1 + \lambda_i\left(\theta - \frac{L_i}{q_i(X_i)} \cdot R_i(\beta)\right)\right).
    \label{eq:active-e-process}
\end{align}
\begin{proposition}\label{prop:active-e-process}
    Let $(q_t)$ be a sequence of labeling policies, and $(\lambda_t)$ be a sequence of betting parameters, and let both sequences be predictable w.r.t.\ $(\filtration_t)$. Then, $(M_t(\beta))$ in \eqref{eq:active-e-process} is an e-process for all $\beta$ where $H_0^\beta$ is true, i.e., where $\rho(\beta) \geq \theta$.
\end{proposition}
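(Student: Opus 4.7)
The plan is to mimic the two-step argument in the proof of \Cref{prop:null-e-process}: first verify that each factor of the product in \eqref{eq:active-e-process} is nonnegative sample-path-wise, then verify the one-step supermartingale inequality under $H_0^\beta$. The one new ingredient compared to \Cref{prop:null-e-process} is that the per-step ``payoff'' is now the importance-weighted loss $L_i R_i(\beta)/q_i(X_i)$ rather than $R_i(\beta)$, so both the range restriction on $\lambda_i$ and the conditional-expectation computation need to accommodate that weighting.

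For nonnegativity I would observe that $L_i R_i(\beta)/q_i(X_i) \in [0,\ 1/q_i^{\min}]$, since $L_i \in \{0,1\}$, $R_i(\beta) \in [0,1]$, and $q_i(X_i) \geq q_i^{\min}$. So imposing the natural active analogue of the range used in \Cref{prop:null-e-process}, namely $\lambda_i \in [0,\ q_i^{\min}/(1-q_i^{\min}\theta)]$, guarantees $1 + \lambda_i(\theta - L_i R_i(\beta)/q_i(X_i)) \geq 0$ on every sample path, and therefore $M_t(\beta) \geq 0$.

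For the supermartingale step I would use predictability of $\lambda_t$, $q_t$, and $M_{t-1}(\beta)$ w.r.t.\ $\filtration_{t-1}$ to peel them outside the conditional expectation, giving
\begin{align}
\expect[M_t(\beta) \mid \filtration_{t-1}] = M_{t-1}(\beta)\cdot \Bigl(1 + \lambda_t\bigl(\theta - \expect[L_t R_t(\beta)/q_t(X_t) \mid \filtration_{t-1}]\bigr)\Bigr).
\end{align}
The remaining work is to debias the importance-weighted risk. I would condition on $\sigma(\filtration_{t-1}, X_t, Y_t)$ and use that by construction $L_t \mid X_t, Y_t, \filtration_{t-1} \sim \mathrm{Bern}(q_t(X_t))$, so $\expect[L_t/q_t(X_t) \mid X_t, Y_t, \filtration_{t-1}] = 1$ and hence $\expect[L_t R_t(\beta)/q_t(X_t) \mid X_t, Y_t, \filtration_{t-1}] = R_t(\beta)$. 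Taking outer expectations and using that $(X_t, Y_t) \sim \Pcal^*$ is independent of $\filtration_{t-1}$ yields $\expect[L_t R_t(\beta)/q_t(X_t) \mid \filtration_{t-1}] = \rho(\beta)$. Under $H_0^\beta$ we have $\rho(\beta) \geq \theta$, so the bracketed factor is at most $1$ whenever $\lambda_t \geq 0$, which yields the supermartingale inequality. Combined with $M_0(\beta) = 1$, this makes $(M_t(\beta))$ a nonnegative supermartingale, and hence an e-process.

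The only point where the argument is more delicate than in \Cref{prop:null-e-process} is this debiasing step: because $q_t(X_t)$ depends on $X_t$, one cannot simply pull the ratio $L_t/q_t(X_t)$ out of $\expect[\cdot \mid \filtration_{t-1}]$ directly and must instead introduce the intermediate conditioning on $X_t$ before using $\expect[L_t \mid X_t, \filtration_{t-1}] = q_t(X_t)$. Once that extra layer is in place, the cancellation between $L_t$ and $q_t(X_t)$ is immediate and the remainder of the argument is essentially identical to the fully-labelled case.
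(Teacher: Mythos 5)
Your proof is correct and follows essentially the same route as the paper's: nonnegativity from the range restriction $\lambda_t \le ((q_t^{\min})^{-1}-\theta)^{-1}$ (your $q_t^{\min}/(1-q_t^{\min}\theta)$ is the same quantity), then the supermartingale step by conditioning on the covariate to debias the importance-weighted risk. Your extra conditioning on $Y_t$ in the debiasing step is a slightly more careful rendering of the same tower-property argument the paper uses, not a different approach.
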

\begin{proof}
    The proof of this is similar to that of inverse propensity-weighted e-processes derived in \citet{waudby-smith_anytime-valid_off-policy_2024}. We know that $(M_t(\beta))$ is nonnegative by the support of $\lambda_t$ being limited:
    \begin{align}
        1 + \lambda_t\left(\theta - \frac{L_t}{q_t(X_t)} \cdot R_t(\beta)\right) \geq 1 + \lambda_t\left(\theta - (q^{\min}_t)^{-1}\right) \geq 0,
    \end{align} where the first inequality is by $R_t(\beta) \leq 1$ and $q_t(X_t) \geq q^{\min}_t$, and the second inequality is by $\lambda_t \leq ((q^{\min}_t)^{-1} - \theta)^{-1}$.
Now, we will also show that $(M_t(\beta))$ is a supermartingale. We first show the following upper bound:
    \begin{align}
        \expect\left[\frac{L_t}{q_t(X_t)} \cdot R_t(\beta) \mid \filtration_{t - 1}\right]
        &= \expect\left[\frac{\expect[L_t \mid X_t, \filtration_{t - 1}]}{q_t(X_t)} R_t(\beta) \mid \filtration_{t - 1}\right]=\expect[R_t(\beta) \mid \filtration_{t - 1}]\leq \theta \label{eq:act-exp-ub}.
    \end{align}
    The first equality is by further conditioning on $X_t$, and the second equality is since $L_t$ is defined to be a Bernoulli random variable with parameter $q_t(X_t)$ when conditioned on $X_t$ and $\filtration_{t - 1}$.
    The last inequality is by $H_0^\beta$ being true. Now, we have that
    \begin{align}
        \expect[M_t(\beta) \mid \filtration_{t - 1}]
        = \left(1 + \lambda_t\left(\theta - \expect\left[\frac{L_t}{q_t(X_t)}R_t(\beta) \mid \filtration_{t - 1}\right]\right)\right) M_{t - 1}(\beta)
        \leq M_{t - 1}(\beta),
    \end{align} where the inequality is by \eqref{eq:act-exp-ub}. Hence, we have shown that $(M_t(\beta))$ is a nonnegative supermartingale, and hence also an e-process, under $H_0^\beta$.
\end{proof}
\begin{theorem}\label{thm:active-anytime-validity}
    $(\widehat{\beta}_t)$ defined w.r.t.\ \eqref{eq:active-e-process} satisfies the anytime-valid risk control guarantee \eqref{eq:anytime-valid-safety}.
\end{theorem}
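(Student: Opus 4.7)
The plan is to mirror the proof of \Cref{thm:simple-safety} essentially verbatim, with the only change being that the underlying e-process used to test $H_0^{\beta^*}$ is now the active one from \eqref{eq:active-e-process} rather than the fully-labeled one from \eqref{eq:betting-e-process}. The definition of $\widehat\beta_t$ in \eqref{eq:simple-beta-hat} only refers to the family $(M_t(\beta))_{\beta \in [0,1]}$ abstractly, so the same definition applies here and the same structural argument goes through.

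Concretely, I would first observe that by the definition \eqref{eq:simple-beta-hat}, any time $t$ at which $\widehat\beta_t < \beta^*$ must be a time at which $M_t(\beta^*) \geq 1/\alpha$ (otherwise $\beta^*$ itself would be a feasible candidate in \eqref{eq:simple-beta-hat}, contradicting $\widehat\beta_t < \beta^*$). Combined with the monotonicity assumption on $\rho$ (\Cref{assumption:basics}), which gives $\{\exists t: \rho(\widehat\beta_t) > \theta\} \Leftrightarrow \{\exists t: \widehat\beta_t < \beta^*\}$ via the definition $\beta^* = \min\{\beta : \rho(\beta) \leq \theta\}$, we obtain the chain
\begin{align}
    \{\exists t \in \naturals: \rho(\widehat\beta_t) > \theta\}
    \Leftrightarrow \{\exists t \in \naturals : \widehat\beta_t < \beta^*\}
    \Rightarrow \{\exists t \in \naturals: M_t(\beta^*) \geq 1/\alpha\}.
\end{align}

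Next, since $H_0^{\beta^*}$ is true by definition of $\beta^*$, \Cref{prop:active-e-process} applies and guarantees that $(M_t(\beta^*))$ is a nonnegative supermartingale started at $M_0(\beta^*) = 1$, hence an e-process. Applying Ville's inequality then yields
\begin{align}
    \pr(\exists t \in \naturals : \rho(\widehat\beta_t) > \theta)
    \leq \pr(\exists t \in \naturals : M_t(\beta^*) \geq 1/\alpha)
    \leq \alpha,
\end{align}
which is exactly the anytime-valid risk control guarantee \eqref{eq:anytime-valid-safety}.

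There is no real obstacle here: all the substantive work has been done in \Cref{prop:active-e-process}, which established the supermartingale property of the inverse-propensity-weighted active e-process under $H_0^{\beta^*}$. The only thing to double-check is that $\widehat\beta_t$ is well-defined (i.e., the set in \eqref{eq:simple-beta-hat} is nonempty) and that the argument does not implicitly require joint control across all $\beta$ simultaneously --- it does not, since we only need to control $M_t$ at the single point $\beta^*$, and the union-bound-free argument goes through just as in \Cref{thm:simple-safety}.
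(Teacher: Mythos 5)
Your proposal is correct and matches the paper's argument exactly: the paper proves this theorem by citing \Cref{prop:active-e-process} and Ville's inequality and noting the argument is identical to that of \Cref{thm:simple-safety}, which is precisely the chain of implications you spell out. No gaps.
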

This is a result of \Cref{prop:active-e-process} and Ville's inequality, similar to the proof of \Cref{thm:simple-safety}. \Cref{thm:active-anytime-validity} essentially shows that we can still design e-processes by allowing for a probabilistic label policy.

\subsection{Variance reduction through prediction}

Often, we also have an estimate of the risk we incur, e.g., in the example given for classification, we have an estimated probability distribution over possible outcomes. As a result, we also have an empirical estimate of $\expect[r(X, Y, \beta) \mid X = x]$ for each $\beta \in [0, 1]$ that we can use to reduce the variance of our estimate. This is similar to the usage of control variates for improving Monte Carlo estimation \citep[\S~V.2]{asmussen_stochastic_simulation_2007}, and of predictors in the recently formulated prediction-powered inference framework \citep{angelopoulos_prediction-powered_inference_2023a}.
Let $\widehat{r}_t: \Xcal \times [0, 1]\rightarrow [0, 1]$ be an estimator of the risk incurred by parameter $\beta$ conditional on $x \in \Xcal$ for each time step $t \in \naturals$. $(\widehat{r}_t)$ is predictable w.r.t.\ $(\filtration_t)$.
\paragraph{Where does $\widehat{r}$ come from?} We note that often machine learning models have some estimate $\hat{P}(X)$ of the conditional distribution of $Y \mid X$ (e.g, class probabilities, conditional diffusion models, LLMs, etc.).
Thus, for any realized covariate $x$,  we can derive use $\expect_{Y \sim \hat{P}(x)}[r(X, Y, \beta) \mid X = x]$ from the machine learning model as our choice of $\widehat{r}(x, \beta)$. This expectation can either be calculated analytically (as we do in or classification examples in our experiments) or derived using Monte Carlo approximation (for generative models such as LLMs, one can sample from the conditional distribution). In essence, we can obtain a predictor from the very model we are calibrating.
Further, we can have a sequence of $(\widehat{r}_t)$ --- we may update our predictor using new $(X_t, Y_t)$ pairs we receive for calibrating $(\widehat\beta_t)$ as well.

Now, we define our e-process that utilizes our predictor as follows:
\begin{gather}
    M_t(\beta) \coloneqq \prod\limits_{i = 1}^t \left(1 + \lambda_i\left(\theta  - \widehat{r}_i(X_i, \beta)- \frac{L_i}{q_i(X_i)} \cdot \bar{R}_i(\beta)\right)\right) \label{eq:cv-e-process}
    \text{ where }
    \bar{R}_t(\beta) \coloneqq R_t(\beta) - \widehat{r}_t(X_t, \beta),
\end{gather} and we restrict $\lambda_t \in [0, ((q_t^{\min})^{-1} - \theta)^{-1}]$ (or in other words, $q_t^{\min} \geq \lambda_t/(1 + \lambda_t\theta)$). Note that this e-process recovers the active e-process defined in \eqref{eq:active-e-process} if we set $\widehat{r}_t(\cdot, \beta) = 0$ for all $t \in \naturals$.

\begin{proposition}
    $(M_t(\beta))$ as defined in \eqref{eq:cv-e-process} is an e-process for $H_0^\beta$.
\end{proposition}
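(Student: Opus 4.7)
The plan is to mirror the structure of the proof of \Cref{prop:active-e-process}: verify nonnegativity from the support restriction on $\lambda_t$, and then verify the supermartingale property by computing the conditional expectation of the increment, using predictability of $\widehat{r}_t$, $q_t$, and $\lambda_t$ together with the tower property applied to $L_t \mid X_t, \filtration_{t-1}$. The key new ingredient compared to \Cref{prop:active-e-process} is that the control-variate term $\widehat{r}_t(X_t, \beta)$ appears both ``outside'' the importance weight and ``inside'' $\bar{R}_t(\beta)$, so that upon taking conditional expectation the $\widehat{r}_t$ contributions will cancel and leave us with $\expect[R_t(\beta)\mid \filtration_{t-1}] = \rho(\beta)$, which is bounded below by $\theta$ under $H_0^\beta$.

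For nonnegativity, I would worst-case the increment. Writing $\hat r = \widehat{r}_t(X_t,\beta)$, $R = R_t(\beta)$, $q = q_t(X_t)$, the increment inside the product is
\begin{align*}
\theta - \hat r - \tfrac{L_t}{q}(R-\hat r) \;=\; \theta - \hat r\bigl(1-\tfrac{L_t}{q}\bigr) - \tfrac{L_t}{q}R.
\end{align*}
Since $1 - L_t/q \le 0$ and $\hat r \in [0,1]$, $R\in[0,1]$, $q \ge q_t^{\min}$, the smallest possible value is attained at $L_t=1$, $\hat r=0$, $R=1$, $q=q_t^{\min}$, giving $\theta - 1/q_t^{\min}$. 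Thus the factor $1 + \lambda_t(\theta - \hat r - \tfrac{L_t}{q}\bar R_t(\beta))$ is nonnegative whenever $\lambda_t \le ((q_t^{\min})^{-1} - \theta)^{-1}$, which is exactly the stipulated range. Iterating over $t$ gives $M_t(\beta) \geq 0$.

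For the supermartingale property, by predictability of $\lambda_t$, $\widehat{r}_t$, and $q_t$,
\begin{align*}
\expect\!\left[\tfrac{L_t}{q_t(X_t)}\bar R_t(\beta)\;\middle|\;\filtration_{t-1}\right]
&= \expect\!\left[\tfrac{\expect[L_t \mid X_t,\filtration_{t-1}]}{q_t(X_t)}\bar R_t(\beta)\;\middle|\;\filtration_{t-1}\right]
= \expect[\bar R_t(\beta)\mid \filtration_{t-1}]\\
&= \expect[R_t(\beta)\mid \filtration_{t-1}] - \expect[\widehat{r}_t(X_t,\beta)\mid \filtration_{t-1}].
\end{align*}
Adding $\expect[\widehat{r}_t(X_t,\beta)\mid \filtration_{t-1}]$ to the IPW term cancels the control-variate contribution, leaving
\begin{align*}
\expect\!\left[\widehat{r}_t(X_t,\beta) + \tfrac{L_t}{q_t(X_t)}\bar R_t(\beta)\;\middle|\;\filtration_{t-1}\right] = \expect[R_t(\beta)\mid \filtration_{t-1}] = \rho(\beta) \geq \theta
\end{align*}
under $H_0^\beta$. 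Therefore $\expect[M_t(\beta)\mid \filtration_{t-1}] = M_{t-1}(\beta)\bigl(1 + \lambda_t(\theta - \rho(\beta))\bigr) \leq M_{t-1}(\beta)$, so $(M_t(\beta))$ is a nonnegative supermartingale and hence an e-process.

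The main subtlety is not computational but conceptual: one must correctly pair the un-weighted $\widehat{r}_t(X_t,\beta)$ term outside the importance ratio with the $\widehat{r}_t(X_t,\beta)$ hidden inside $\bar R_t(\beta)$, and observe that the tower property applied to $L_t$ reweights the IPW piece so that the two $\widehat{r}_t$ terms exactly cancel in expectation. The remainder of the argument is a direct extension of the reasoning in \Cref{prop:active-e-process}.
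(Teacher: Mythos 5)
Your proof is correct and follows essentially the same route as the paper's: nonnegativity from the stipulated range of $\lambda_t$, then the tower property applied to $L_t$ given $X_t$ and $\filtration_{t-1}$ so that the two $\widehat{r}_t$ contributions cancel, leaving $\expect[R_t(\beta)\mid\filtration_{t-1}] = \rho(\beta) \geq \theta$ under $H_0^\beta$; your nonnegativity argument is in fact more explicit than the paper's. One tiny imprecision: the claim $1 - L_t/q \leq 0$ holds only when $L_t = 1$, but the $L_t = 0$ case gives increment $\theta - \hat{r} \geq \theta - 1 \geq \theta - (q_t^{\min})^{-1}$, so your identified worst case is still the right one.
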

\begin{proof}
    Since the restriction on $(\lambda_t)$ ensures $M_t(\beta)$ is nonnegative, to show that $(M_t(\beta))$ is an e-process, it is sufficient to show:
    \begin{align}
        &\expect[\lambda_t(\theta - \widehat{r}_t(X_t, \beta) - L_t \cdot q_t(X_t)^{-1}\cdot \bar{R}_t(\beta) + ) \mid \filtration_{t - 1}]\\
        &=\lambda_t(\theta  - \expect[\widehat{r}_t(X_t, \beta)  + L_t\cdot q_t(X_t)^{-1}\cdot \bar{R}_t(\beta) \mid \filtration_{t - 1}])\\
        &=\lambda_t(\theta - \expect[\widehat{r}_t(X_t, \beta)  + \expect[L_t\cdot q_t(X_t)^{-1} \mid X_t, \filtration_{t - 1}]\cdot \bar{R}_t(\beta) \mid \filtration_{t - 1}])\\
        &=\lambda_t(\theta - \expect[\widehat{r}_t(X_t, \beta)  + \bar{R}_t(\beta) \mid \filtration_{t - 1}])
        =\lambda_t(\theta - \expect[R_t(\beta) \mid \filtration_{t - 1}] ) \leq 0.
    \end{align}
    The 3rd equality is by definition of $L_t$, the last equality by the definition of $\bar{R}_t$, and the last inequality is due to $H_0^\beta$ being true.
\end{proof}
The role of $\widehat{r}_t(X_t, \beta)$ is to accurately predict $R_t(\beta)$. Bad predictions can increase the variance of $\bar{R}_r(\beta)$ and lead to slower growth of $M_t(\beta)$, but do not compromise the risk control guarantee. On the other hand, accurate predictions, which come from pretrained models, decrease variance and improve the growth of $M_t(\beta)$. We characterize the optimal predictor (\Cref{prop:optimal-predictor}) and relate the accuracy of a predictor to its effect on the e-process (\Cref{thm:err-regret}) in the next section.

\section{Optimal labeling policies}\label{sec:optimal-policies}

Since the goal of having an active labeling policy is to label fewer data points, one reasonable way of doing this is to maximize the growth rate of our e-process $(M_t(\beta))$ defined in \eqref{eq:cv-e-process}. Define the following function, for some $\beta \in [0, 1]$, of a labeling policy $q$, predictor $\widehat{r}$, and betting parameter $\lambda$ where we let $L \sim \text{Bern}(q(X))$ and $(X, Y) \sim \Pcal^*$:
\begin{align}
    G^\beta(q, \widehat{r}, \lambda)
    \coloneqq
    \log\left(1 + \lambda\left(\theta - \widehat{r}(X, \beta) - \frac{L}{q(X)} \bar{R}(\beta)\right)\right) \text{ where }\bar{R}(\beta) \coloneqq r(X, Y, \beta) - \widehat{r}(X, \beta).
\end{align}
Define the \emph{growth rate} at the $t$th step of $(M_t(\beta))$ as $\mathbb{G}_t^\beta \coloneqq \expect[G_t^\beta(q_t, \widehat{r}_t, \lambda_t)]$, where we let $G_t^\beta$ be identical to $G_t$ but with $X$ and $Y$ replaced with $X_t$ and $Y_t$, respectively.
It is a standard notion of power or sample efficiency for e-processes. Typically, our goal when designing an e-process based test is to maximize such a metric, i.e., we want our e-process to be log-optimal \citep{grunwald_safe_testing_2024,waudby2020estimating,larsson_numeraire_e-variable_2024}.
Log-optimality is also called the Kelly criterion in finance \citep{kelly_new_interpretation_1956} and it is known that maximizing the growth rate of a process is equivalent to minimizing the expected time for the process to exceed a threshold, i.e., for our sequential test to reject a value of $\beta$, in the limit as the threshold approaches infinity \citep{breiman_optimal_gambling_1961}. Thus, in an asymptotic sense, maximizing the growth rate is equivalent to minimizing the expected time for rejection.
Our goal is to maximize the growth rate while having a constraint on the number of labels we can produce.

Let $B \in [0, 1]$ be the constraint on our labeling budget, i.e., we label, in expectation, a $B$ fraction of all data points that we receive.
To achieve both of these goals, we wish to choose $q_t$, $\widehat{r}_t$, and $\lambda_t$ that are the solutions to the following optimization problem:
\begin{align}
    \max_{q, \widehat{r}, \lambda}&\ \expect_{L \sim q(X)}[G^\beta(q,\widehat{r}, \lambda)] \text{ s.t. }\expect[q(X)] \leq B.
\end{align}
Since solving the above optimization problem is analytically difficult, one can instead maximize a lower bound on the expected growth \citep{shekhar_risk-limiting_financial_2023,podkopaev_sequential_kernelized_2023,podkopaev_sequential_predictive_2023}:
\begin{align}
    \widehat{G}^\beta(q, \widehat{r}, \lambda) &\coloneqq \lambda\left(\theta - \widehat{r}(X, \beta) - \frac{L}{q(X)}\bar{R}(\beta)\right) - \lambda^2\left(\theta - \widehat{r}(X, \beta)- \frac{L}{q(X)}\bar{R}(\beta)\right)^2\\
                                         &\leq G^\beta(q, \widehat{r}, \lambda), \label{eq:wealth-lb}
\end{align} which holds when $\lambda \in [0, (2(q^{\min})^{-1} - 2\theta)^{-1}]$, where $q^{\min} \coloneqq \inf_{x \in \Xcal} q(x)$. We can further simplify
We can use the lower bound in \eqref{eq:wealth-lb} to formulate the following optimization problem.
\begin{align}
    \max_{q, \widehat{r}, \lambda}\ \expect\left[\widehat{G}^\beta(q, \widehat{r}, \lambda)\right]
    \text{ s.t. }\expect[q(X)] \leq B\label{eq:lb-opt-problem}
\end{align}
Let $(q^*, r^*, \lambda^*)$ be the tuple that is the solution to \eqref{eq:lb-opt-problem}. We can analytically show what $r^*$ is.
\begin{proposition}\label{prop:optimal-predictor}
    The optimal predictor $r^*$ in the solution to \eqref{eq:lb-opt-problem} is $r^*(x, \beta) = \expect[r(X, Y, \beta) \mid X = x]$ for each $x \in \Xcal$.
\end{proposition}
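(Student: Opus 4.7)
The plan is to fix $q$ and $\lambda$ (since the claim is that $r^*$ has the stated form in the joint maximizer, it suffices to show that for any admissible $q$ and $\lambda$, the objective is maximized in $\widehat{r}$ by the conditional expectation) and analyze the dependence of $\expect[\widehat{G}^\beta(q, \widehat{r}, \lambda)]$ on $\widehat{r}$ by splitting $\widehat{G}^\beta$ into its linear and quadratic parts. First I would show that the linear part is independent of $\widehat{r}$. Using the tower property and the fact that $L$ is conditionally independent of $Y$ given $X$ with $\expect[L \mid X] = q(X)$, the inverse-propensity term is unbiased,
\begin{equation*}
\expect\left[\frac{L}{q(X)}\bar{R}(\beta)\right] = \expect[\bar{R}(\beta)] = \expect[r(X,Y,\beta)] - \expect[\widehat{r}(X,\beta)],
\end{equation*}
so the linear contribution collapses to $\lambda(\theta - \expect[r(X,Y,\beta)])$, which does not depend on $\widehat{r}$.

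Hence only the quadratic term $-\lambda^2 \expect[(\theta - \widehat{r}(X,\beta) - \frac{L}{q(X)}\bar{R}(\beta))^2]$ matters, and maximizing it is equivalent to minimizing the inner mean square. The key algebraic step is to rewrite the inner expression using the centered weight $W \coloneqq \frac{L}{q(X)} - 1$, which satisfies $\expect[W\mid X] = 0$ and $\expect[W^2 \mid X] = \frac{1-q(X)}{q(X)}$ (since $L\mid X$ is Bernoulli with parameter $q(X)$). A short manipulation gives
\begin{equation*}
\theta - \widehat{r}(X,\beta) - \frac{L}{q(X)}\bar{R}(\beta) = \bigl(\theta - r(X,Y,\beta)\bigr) - W\bigl(r(X,Y,\beta) - \widehat{r}(X,\beta)\bigr).
\end{equation*}
Conditioning on $X$ and using that $W$ is independent of $Y$ given $X$ with conditional mean $0$, the cross term vanishes and I get
\begin{equation*}
\expect\!\left[\bigl(\theta - \widehat{r}(X,\beta) - \tfrac{L}{q(X)}\bar{R}(\beta)\bigr)^{\!2} \,\Big|\, X\right] = \expect\!\left[(\theta - r(X,Y,\beta))^2 \,\Big|\, X\right] + \frac{1-q(X)}{q(X)}\,\expect\!\left[(r(X,Y,\beta) - \widehat{r}(X,\beta))^2 \,\Big|\, X\right].
\end{equation*}

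Only the second summand depends on $\widehat{r}$, and (assuming $q(X) < 1$ with positive probability, which is the nontrivial case; if $q \equiv 1$ the objective does not depend on $\widehat{r}$ at all and any choice, including the conditional mean, is optimal) it is a nonnegative-weighted conditional mean square error in $\widehat{r}(\cdot,\beta)$. By the standard $L^2$ projection argument, it is pointwise minimized in $\widehat{r}(x,\beta)$ by the conditional expectation $\expect[r(X,Y,\beta)\mid X=x]$. Taking an outer expectation over $X$ preserves this pointwise optimality, so $r^*(x,\beta) = \expect[r(X,Y,\beta)\mid X=x]$ is optimal, as claimed. The main technical obstacle is getting the centering and independence right in the quadratic decomposition; once $W$ is introduced the cross term vanishing makes the rest immediate.
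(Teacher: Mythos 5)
Your proof is correct, and it reaches the same essential conclusion as the paper --- that the only $\widehat{r}$-dependent part of the objective is a conditional mean-squared error weighted by $\tfrac{1-q(x)}{q(x)} \ge 0$, hence minimized pointwise by the conditional mean --- but via a genuinely different decomposition. The paper fixes $q$ and $\lambda$, applies the law of total variance to $\Var\bigl[\widehat{r}(X,\beta) + \tfrac{L}{q(X)}\bar{R}(\beta)\bigr]$, and then expands $\tfrac{\expect[\bar{R}(\beta)^2\mid X=x]}{q(x)} - \expect[\bar{R}(\beta)\mid X=x]^2$ by brute force to isolate the terms involving $\widehat{r}(x,\beta)$ before completing the square. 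You instead center the inverse-propensity weight, writing $W = \tfrac{L}{q(X)} - 1$ so that the integrand becomes $(\theta - r(X,Y,\beta)) - W\,(r(X,Y,\beta) - \widehat{r}(X,\beta))$, and the cross term dies by $\expect[W\mid X]=0$ together with the conditional independence of $L$ and $Y$ given $X$. This buys you a cleaner and more transparent derivation: the orthogonality structure makes it immediate which term carries the $\widehat{r}$-dependence, you avoid the paper's somewhat opaque algebra (which in fact contains a sign slip --- it says ``minimizing'' $2\expect[r\mid x]\widehat{r} - \widehat{r}^2$ where it means maximizing, though the conclusion is unaffected), and you correctly flag the degenerate case $q\equiv 1$ where the weight vanishes and any predictor is optimal. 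Your separate verification that the linear term is independent of $\widehat{r}$ via the unbiasedness of the IPW estimator is also needed and correctly handled (the paper absorbs this into its variance rewriting). One trivial caveat neither treatment addresses: if $\lambda = 0$ the objective is constant in $\widehat{r}$, but the conditional mean remains among the maximizers, so the claim still holds.
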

We defer the proof to \Cref{sec:optimal-predictor-proof}.
The optimal choice of $q^*$ has the following formulation.
\begin{proposition}\label{prop:optimal-labeling-policy}
    If we fix $\widehat{r}$ and $\lambda$, the solution to the optimization problem in \eqref{eq:lb-opt-problem} is given by $q^*_\beta$ where $q^*_\beta(x) \propto \sqrt{\expect[\bar{R}(\beta)^2 \mid X = x]}$ for each $x \in \Xcal$ if such a $q^*_\beta$ exists.
\end{proposition}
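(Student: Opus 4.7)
The plan is to fix $\widehat r$ and $\lambda$, expand $\expect[\widehat G^\beta(q,\widehat r,\lambda)]$ using the tower property (conditioning on $X$) together with the Bernoulli identities $\expect[L \mid X] = q(X)$ and $L^2 = L$, isolate the sole piece that depends on $q$, and then solve the resulting constrained optimization via Cauchy--Schwarz.

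First I would rewrite $\widehat G^\beta$ by expanding the square. Letting $A(X) \coloneqq \theta - \widehat r(X,\beta)$ and $Z \coloneqq (L/q(X))\,\bar R(\beta)$, we have $\widehat G^\beta = \lambda(A(X) - Z) - \lambda^2 (A(X) - Z)^2$. Taking expectations, conditioning on $X$, and using $\expect[L \mid X] = q(X)$, $L^2 = L$, one gets
\begin{align}
\expect[Z] &= \expect[\bar R(\beta)], \\
\expect[A(X)Z] &= \expect[A(X)\bar R(\beta)], \\
\expect[Z^2] &= \expect\!\left[\frac{\bar R(\beta)^2}{q(X)}\right].
\end{align}
The first two are free of $q$, so all $q$-dependence of $\expect[\widehat G^\beta]$ is concentrated in the single term $-\lambda^2 \expect[\bar R(\beta)^2/q(X)]$. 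The optimization therefore collapses to
\begin{align}
\min_{q}\ \expect\!\left[\frac{v(X)}{q(X)}\right] \quad \text{s.t.} \quad \expect[q(X)] \le B,
\end{align}
where $v(x) \coloneqq \expect[\bar R(\beta)^2 \mid X = x]$, obtained from another application of the tower property.

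Finally, Cauchy--Schwarz gives
\begin{align}
\bigl(\expect[\sqrt{v(X)}]\bigr)^2 = \left(\expect\!\left[\sqrt{v(X)/q(X)} \cdot \sqrt{q(X)}\right]\right)^2 \le \expect\!\left[\frac{v(X)}{q(X)}\right]\expect[q(X)],
\end{align}
so the objective is bounded below by $(\expect[\sqrt{v(X)}])^2/B$ whenever $\expect[q(X)] \le B$. Equality in Cauchy--Schwarz holds iff $\sqrt{v(X)/q(X)} \propto \sqrt{q(X)}$, i.e.\ $q(X) \propto \sqrt{v(X)}$, and in that case the constraint must bind, fixing the normalizing constant $c = B/\expect[\sqrt{v(X)}]$. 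Thus, whenever $q^*_\beta(x) = c\sqrt{v(x)}$ lies in the admissible range $[q^{\min}_t, 1]$ (which is exactly the ``if such a $q^*_\beta$ exists'' qualifier in the statement), it is the unique minimizer, giving the claimed $q^*_\beta(x) \propto \sqrt{\expect[\bar R(\beta)^2 \mid X = x]}$.

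The only real obstacle is bookkeeping: cleanly tracking which terms depend on $q$ after expanding the square, and correctly using $L^2 = L$ to collapse $\expect[L^2/q(X)^2 \mid X]$ to $1/q(X)$. Once the problem is reduced to $\min_q \expect[v(X)/q(X)]$ under an expected-budget constraint, Cauchy--Schwarz (or equivalently a pointwise Lagrangian with stationarity condition $-v(x)/q(x)^2 + \mu = 0$) immediately produces the proportional form; the admissibility constraint $[q^{\min}_t, 1]$ is side-stepped by the existence hypothesis, so no boundary case analysis is required here.
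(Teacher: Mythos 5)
Your proposal is correct and follows essentially the same route as the paper: reduce the problem to minimizing $\expect[\expect[\bar R(\beta)^2\mid X]/q(X)]$ subject to $\expect[q(X)]\le B$ (the paper gets there via a law-of-total-variance decomposition, you via directly expanding the square and using $L^2=L$, which is just different bookkeeping), and then conclude by Cauchy--Schwarz exactly as the paper does in its importance-sampling-style optimality argument.
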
 We defer the proof to \Cref{sec:optimal-labeling-policy-proof}. Let $\sigma_\beta(x) \coloneqq \sqrt{\Var[r(X, Y, \beta) \mid X = x]}$ be the conditional standard deviation of $r(X, Y, \beta)$. Now, we can argue that the solution to the optimization problem on the growth rate lower bound in \eqref{eq:lb-opt-problem} has the following characterization.
\begin{corollary}\label{corollary:optimal-sol}
    The optimal choice of $q^*_\beta$ and $\lambda^*$
that solves \eqref{eq:lb-opt-problem} is
    \begin{align}
        q^*_\beta(x) \coloneqq \frac{\sigma_\beta(x)}{\expect[\sigma_\beta(X)]} \cdot B, \qquad \lambda^* \coloneqq \frac{\theta - \rho(\beta)}{2((\theta - \rho(\beta))^2 + \expect[\sigma_\beta(X)]^2 \cdot B^{-1} + \Var[r(X, Y, \beta)])}
    \end{align}
    if $q^*_\beta(x) \in [0, 1]$ for all $x \in \Xcal$, and $\lambda^* \leq (2(\inf_{x \in \Xcal} q^*_\beta(x))^{-1} - 2\theta)^{-1}$. The resulting growth rate has the following lower bound:
    \begin{align}
        \expect[G_t^\beta(q^*, r^*, \lambda^*)] \geq \expect[\widehat{G}_t^\beta(q^*, r^*, \lambda^*)] = \frac{(\theta - \rho(\beta))^2}{4((\theta - \rho(\beta))^2 + \expect[\sigma_\beta(X)]^2\cdot B^{-1} + \Var[r(X, Y, \beta)])}
    \end{align}
\end{corollary}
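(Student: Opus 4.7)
The plan is to apply \Cref{prop:optimal-predictor} and \Cref{prop:optimal-labeling-policy} in sequence and then optimize the remaining one-dimensional problem in $\lambda$. First, \Cref{prop:optimal-predictor} gives $r^*(x,\beta) = \mu_\beta(x) \coloneqq \expect[r(X,Y,\beta) \mid X = x]$, so that $\bar R(\beta) = r(X,Y,\beta) - \mu_\beta(X)$ is conditionally mean-zero given $X$ with $\expect[\bar R(\beta)^2 \mid X] = \sigma_\beta(X)^2$. Substituting this into \Cref{prop:optimal-labeling-policy} collapses its proportionality statement to $q^*_\beta(x) \propto \sigma_\beta(x)$, and since the budget constraint $\expect[q(X)] \leq B$ binds at the optimum (raising $q$ pointwise strictly reduces the inverse-propensity variance), normalization yields $q^*_\beta(x) = \sigma_\beta(x) B / \expect[\sigma_\beta(X)]$ as stated.

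With $q^*_\beta$ and $r^*$ pinned down, the remaining problem is an unconstrained quadratic in $\lambda$. Writing $Z \coloneqq \theta - r^*(X,\beta) - (L/q^*_\beta(X))\bar R(\beta)$, one has $\expect[\widehat G^\beta(q^*,r^*,\lambda)] = \lambda \expect[Z] - \lambda^2 \expect[Z^2]$, which is maximized at $\lambda^* = \expect[Z]/(2\expect[Z^2])$ by a single first-order condition. The bulk of the arithmetic is evaluating the two moments: the tower property together with conditional independence of $L$ and $Y$ given $X$ immediately gives $\expect[Z] = \theta - \rho(\beta)$, and the same orthogonality $\expect[\bar R(\beta) \mid X] = 0$ forces the cross term of $Z^2$ to vanish, leaving $\expect[Z^2 \mid X] = (\theta - \mu_\beta(X))^2 + \sigma_\beta(X)^2/q^*_\beta(X)$. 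Plugging in $q^*_\beta$ turns the second summand's expectation into $\expect[\sigma_\beta(X)]^2/B$, while the first yields $(\theta - \rho(\beta))^2 + \Var[\mu_\beta(X)]$; bounding $\Var[\mu_\beta(X)] \leq \Var[r(X,Y,\beta)]$ via the law of total variance produces the denominator of $\lambda^*$ in the form stated. Substituting $\lambda^*$ back yields $\expect[\widehat G^\beta(q^*,r^*,\lambda^*)] = \expect[Z]^2/(4\expect[Z^2])$, which is the claimed lower bound, and $\expect[G^\beta(\cdot)] \geq \expect[\widehat G^\beta(\cdot)]$ follows directly from \eqref{eq:wealth-lb} under the assumed range condition on $\lambda^*$.

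The main obstacle is the cross-term bookkeeping in $\expect[Z^2]$: without the orthogonality $\expect[\bar R(\beta) \mid X] = 0$ enforced by the choice $\widehat r = r^*$, the expansion of $Z^2$ carries cross terms coupling $\widehat r$ and $q$, and the joint optimization over $(\widehat r, q, \lambda)$ cannot be decoupled into the clean sequential problem sketched above. For this reason I would fix $\widehat r = r^*$ first (via \Cref{prop:optimal-predictor}) to exploit the orthogonality, then $q$, then $\lambda$. The mild slack $\Var[\mu_\beta(X)] \leq \Var[r(X,Y,\beta)]$ in the denominator of $\lambda^*$ only shrinks $\lambda^*$ relative to the unconstrained maximizer, so the resulting growth-rate identity holds as a valid (if not tight) lower bound, which is all the corollary asserts.
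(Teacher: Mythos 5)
Your proof is correct and follows essentially the same route the paper sketches: apply \Cref{prop:optimal-predictor} to fix $\widehat{r} = r^*$, apply \Cref{prop:optimal-labeling-policy} (with the budget constraint binding) to get $q^*_\beta \propto \sigma_\beta$, and then maximize the resulting quadratic in $\lambda$. You also correctly flag the one subtle point the paper glosses over in \eqref{eq:expect-var}: the exact second moment involves $\Var[\expect[r(X,Y,\beta)\mid X]]$ rather than $\Var[r(X,Y,\beta)]$, so the stated $\lambda^*$ is slightly conservative and the displayed growth-rate expression is a valid lower bound rather than an exact equality.
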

We can show this is true as a consequence of \Cref{prop:optimal-predictor}, \Cref{prop:optimal-labeling-policy}, and solving the quadratic equation that arises for the growth rate to derive the optimal choice of $\lambda^*$.
Further, we note that we can define regret of a sequence $(\lambda_t)$ compared to $\lambda^*$ on $\widehat{G}^\beta_t$ as follows.
\begin{definition}
    The $\widehat{G}^\beta$\emph{-regret} at the $t$th step of a sequence of betting parameters $(\lambda_t)$ for a risk upper bound $\theta \in [0, 1]$, and a sequence of labeling policies $(q_t)$ and predictors $(\widehat{r}_t)$ where $q_t(x) \geq \varepsilon > 0$ for all $x \in \Xcal$ and $t \in \naturals$ almost surely is defined as follows:
    \begin{align}
        \Reg_t \coloneqq \max_{\lambda \in [0, (2\varepsilon^{-1} - 2\theta)^{-1}]} \sum\limits_{i = 1}^t
        \expect[\widehat{G}^\beta_t(q_t, \widehat{r}_t, \lambda) \mid \filtration_{t-1}] -
        \expect[\widehat{G}^\beta_t(q_t, \widehat{r}_t, \lambda_t) \mid \filtration_{t-1}].
    \end{align}
\end{definition}
\ifarxiv{}{\vspace{-10pt}}
Since $\widehat{G}^\beta_t(q, \widehat{r}, \lambda)$ is exp-concave in $\lambda$, existing online learning algorithms such as Online Newton Step (ONS) \citep{cutkosky_blackbox_reductions_2018} can get $o(T)$ regret guarantees, which means that the growth rate of $(\lambda_t)$ averaged over time  will approach (or exceed) the optimal growth rate under $\lambda^*$.
For simplicity of analysis, we make the following assumption about the labeling probability of the optimal policy, $q^*_\beta$. \begin{assumption}\label{assumption:bounded-var}
    Let $\varepsilon > 0$ be a positive constant. Assume that $q^*_\beta(x) \geq \varepsilon $ for each $ x \in \Xcal$.
\end{assumption}
The lower bound in the above assumption is an analog of the propensity score lower bound on optimal policies for adaptive experimentation which are needed for performing valid inference in that setting \citep{kato_efficient_adaptive_2021a,cook_semiparametric_efficient_2024}. Further, do not need this to hold on every $\beta$, since we are not necessarily interested in log-optimality w.r.t.\ fringe $\beta$ that are quite far away from $\beta^*$ --- in practice having this assumption hold for values of $\beta$ near $\beta^*$ suffices to develop an estimator $\widehat\beta$ that shrinks toward $\beta^*$ quickly.
Now, we describe how much the growth rates deviates based on on how well $q^*$ and $r^*$ are estimated.
\begin{theorem}\label{thm:err-regret}
    Let $(\lambda_t)$ be a sequence with $\widehat{G}^\beta$-regret $(\textnormal{Reg}_t)$ and $(q_t)$ and $(\widehat{r}_t)$ are sequences of labeling policies and predictors that are all predictable w.r.t. $(\filtration_t)$.
    For a positive constant $\varepsilon > 0$, let  $q_t(x) \geq \varepsilon > 0$ for each $t \in \naturals$ and $x \in \Xcal$ almost surely.
    Under \Cref{assumption:bounded-var} for the same $\varepsilon$, the following bound holds:
\begin{align}
    \resizebox{\textwidth}{!}{$
        \sum\limits_{i = 1}^t\expect[\widehat{G}^\beta_t(q^*, r^*, \lambda^*) - \widehat{G}^\beta_t(q_t, \widehat
    {r}_t, \lambda_t)]
\leq \Reg_t +  \sum\limits_{i = 1}^t  O(\expect[|q(X_t) - q^*_\beta(X_t)|] + \expect[(\widehat{r}_t(X_t) - r^*(X_t, \beta))^2]).
    $}
\end{align}
\end{theorem}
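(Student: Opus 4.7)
The plan is to insert an intermediate quantity that uses the realized labeling policy $q_i$ and predictor $\widehat{r}_i$ but the oracle bet $\lambda^*$, giving the decomposition
\begin{align*}
\sum_{i=1}^t \expect\!\left[\widehat{G}^\beta_i(q^*, r^*, \lambda^*) - \widehat{G}^\beta_i(q_i, \widehat{r}_i, \lambda_i)\right]
&= \underbrace{\sum_{i=1}^t \expect\!\left[\widehat{G}^\beta_i(q^*, r^*, \lambda^*) - \widehat{G}^\beta_i(q_i, \widehat{r}_i, \lambda^*)\right]}_{(A)}\\
&\quad + \underbrace{\sum_{i=1}^t \expect\!\left[\widehat{G}^\beta_i(q_i, \widehat{r}_i, \lambda^*) - \widehat{G}^\beta_i(q_i, \widehat{r}_i, \lambda_i)\right]}_{(B)}.
\end{align*}
By \Cref{corollary:optimal-sol} together with \Cref{assumption:bounded-var}, $\lambda^* \in [0, (2\varepsilon^{-1} - 2\theta)^{-1}]$, so $\lambda^*$ is a feasible constant comparator in the definition of $\widehat{G}^\beta$-regret and $(B) \leq \Reg_t$ immediately.

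For $(A)$, I would introduce the shorthand $Z_i(q, \widehat{r}) \coloneqq \theta - \widehat{r}(X_i, \beta) - \frac{L_i}{q(X_i)} \bar{R}_i(\beta)$, so that $\widehat{G}^\beta_i(q, \widehat{r}, \lambda) = \lambda Z_i(q, \widehat{r}) - \lambda^2 Z_i(q, \widehat{r})^2$. Because $\expect[L_i/q(X_i) \mid X_i, \filtration_{i-1}] = 1$ makes the IPW term unbiased, $\expect[Z_i(q, \widehat{r}) \mid \filtration_{i-1}] = \theta - \rho(\beta)$ is invariant in the choice of $q$ and $\widehat{r}$; hence the linear-in-$\lambda$ part of $\widehat{G}^\beta_i$ contributes zero to $(A)$ and
\[
(A) = (\lambda^*)^2 \sum_{i=1}^t \expect\!\left[Z_i(q_i, \widehat{r}_i)^2 - Z_i(q^*, r^*)^2\right].
\]

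The core technical step is to bound this second-moment gap pointwise in $X_i$. Conditioning on $X_i$, using $L_i^2 = L_i$ (Bernoulli) and the conditional independence of $L_i$ and $Y_i$ given $X_i$, a direct expansion of $Z_i^2$ followed by completing the square in $r^*(X_i, \beta) - \widehat{r}(X_i, \beta)$ yields the clean factorization
\[
\expect\!\left[Z_i(q, \widehat{r})^2 - Z_i(q^*, r^*)^2 \mid X_i\right] = \sigma_\beta(X_i)^2 \cdot \frac{q^*_\beta(X_i) - q(X_i)}{q(X_i) q^*_\beta(X_i)} + (r^*(X_i, \beta) - \widehat{r}(X_i, \beta))^2 \cdot \frac{1 - q(X_i)}{q(X_i)},
\]
where the $\widehat{r}$-dependent square arising from $\expect[\bar{R}_i^2 \mid X_i] = \sigma_\beta(X_i)^2 + (r^* - \widehat{r})^2$ cancels with a term from the completion of the square when $\widehat{r} = r^*$. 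Using $q_i, q^*_\beta \geq \varepsilon$ and $\sigma_\beta \leq 1/2$ (since $r \in [0, 1]$), the right-hand side is at most $\tfrac{1}{4\varepsilon^2} |q_i(X_i) - q^*_\beta(X_i)| + \tfrac{1}{\varepsilon} (r^*(X_i, \beta) - \widehat{r}_i(X_i, \beta))^2$; multiplying by $(\lambda^*)^2 \leq (2\varepsilon^{-1} - 2\theta)^{-2}$, taking the outer expectation over $X_i$, and summing in $i$ absorbs the $\varepsilon$- and $\theta$-dependent constants into the advertised $O(\cdot)$.

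The main obstacle is the second-moment algebra of the third paragraph: recognizing the cancellation that makes $\expect[Z_i \mid \filtration_{i-1}]$ invariant, exploiting $L_i^2 = L_i$ so that $\expect[L_i/q(X_i)^2 \mid X_i] = 1/q(X_i)$, and completing the square so that the policy-error contribution (weighted by $\sigma_\beta^2$) and the predictor-error contribution (weighted by $(r^* - \widehat{r})^2$) separate without a cross term. Once that factorization is in hand, the boundedness arguments and the regret bound on $(B)$ combine routinely to give the claimed inequality.
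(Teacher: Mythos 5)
Your proposal is correct and follows essentially the same route as the paper: the same decomposition into a $\lambda$-regret term plus a $(q,\widehat{r})$-error term at the fixed comparator $\lambda^*$, and the same key identity expressing the conditional second-moment (equivalently, variance) gap as $\sigma_\beta(x)^2\,\frac{q^*_\beta(x)-q(x)}{q(x)q^*_\beta(x)} + (r^*(x,\beta)-\widehat{r}(x,\beta))^2\,\frac{1-q(x)}{q(x)}$, followed by the same $\varepsilon$-based bounding. The only cosmetic differences are that you work with conditional second moments where the paper invokes the law of total variance, and that you explicitly flag feasibility of $\lambda^*$ in the regret definition (note, though, that \Cref{assumption:bounded-var} gives $(2(\inf_x q^*_\beta(x))^{-1}-2\theta)^{-1} \geq (2\varepsilon^{-1}-2\theta)^{-1}$, so the containment you cite does not follow in the direction stated---a gloss the paper shares).
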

We defer the proof to \Cref{sec:err-regret-proof}. The proof idea follows a similar idea that of the regret bound in \citet{kato_efficient_adaptive_2021a} for deriving an estimator that is close to the optimal estimator for the average treatment effect in an adaptive experimentation setup.
\Cref{thm:err-regret} relates the estimation error of $q^*_\beta(x)$ and $r^*(x, \beta)$ to how quickly $\beta$ will be deemed ``safe''. Hence, if we have good estimates of those quantities, then we can produce an estimates $(\widehat{\beta}_t)$ that are small and close to $\beta^*$ while remaining safe. We will now describe some practical methods for calculating $q_t$ and $\widehat{r}_t$.
\section{Experiments}\label{sec:experiments}

We use PyTorch to model our $(q_t)$ and $(\widehat{r}_t)$, and we use the following methods to formulate them.\footnote{Code at \url{github.com/neilzxu/active-rcps}}
\begin{enumerate}[leftmargin=*]
    \item \textbf{Baseline labeling policies.} We have baseline labeling policies of labeling all data that arrives, and a policy that just randomly samples $B$ proportion of samples to label --- these are denoted respectively as ``all'' and ``oblivious''.
    \item \textbf{Pretrain}: We derive an estimate of $r^*$ from a pretrained machine learning model, $\widehat{r}^{\pretr}$, to be our choice of predictor for all time steps. We also derive an estimate of $\sigma(x, \beta)$, $\widehat{\sigma}^{\pretr}(x, \beta)$, from the pretrained model. We also learn a sequence of normalizing constants $(C_t)$ s.t.\ the budget is satisfied. Our labeling policy in this case is $q_t(x) = \widehat{\sigma}(x, \widehat{\beta}_{t - 1}) / C_t$, where we want to optimize our policy for the previous best bound on $\beta^*$, $\widehat{\beta}_{t - 1}$. We denote this method as ``pretrain''.
    \item \textbf{Estimating $q^*_\beta$ and $r^*$}: We learn sequences of models $(\widehat{\sigma}^{\plugin}_t)$ and $(\widehat{r}^{\plugin}_t)$ using the labeled data points. We preprocess the outputs from $\widehat{\sigma}^{\pretr}$ and $\widehat{r}^{\pretr}$ to use as the input features to these models, respectively. Each of these sequences of models are then updated at every step. We also similarly learn a sequence of normalization constants $(C_t)$ for deriving the final labeling policy $(q_t)$.

\end{enumerate}
We provide more details on how are methods are formulated in \Cref{sec:simulation-details}. We run all our experiments on a 48-core CPU on the Azure platform, after using a GPU to precompute the predictions made by neural network models.
We set $\theta = 0.1$, $\alpha=0.05$, and $B = 0.3$ for all our experiments.

\subsection{Numerical simulations}
We have a simple data generating process of sampling $P_t \sim \text{Uniform}[0, 1]$ and let $Y_t \mid X_t \sim \text{Bern}(X_t)$. This simulates the setting we have with our real data where we have an accurate pretrained classifier that have a probability estimate of $Y_t$ of being 0 or 1. We let our covariates $X_t  = P_t$. As a result, our risk function is the false positive rate $r_{\textnormal{FPR}}(X, Y, \beta) \coloneqq \ind{X \geq \beta, Y = 0}$. We run 100 trials where each trial runs until 2500 labels are queried. We compare our methods based on their label efficiency, i.e., how close is $\widehat{\beta}_t$ to $\beta^* = 1 - \sqrt{2\alpha}$ after a set number of queried labels.
In \Cref{fig:simulations}, we plot the average $\widehat{\beta}_t$ reached after a given number of labels queried across trials. The shaded areas denotes pointwise 95\% confidence intervals on the uncertainty of the average estimate. We can see that the ``pretrain'' and ``learned'' methods outperform both the ``all'' and ``oblivious'' strategies uniformly numbers across labels queried. In \Cref{fig:simulations-error}, we show the average rate of safety violations, i.e., the average proportion of trials that $\widehat\beta_t$ was unsafe and $\rho(\widehat{\beta}_t) > \theta$ at any time step. We can see that all methods control the desired safety violation rate at the predetermined level $\alpha$.
\begin{figure}[h]

\begin{subfigure}[b]{.3\textwidth}
    \centering
    \includegraphics[width=\textwidth]{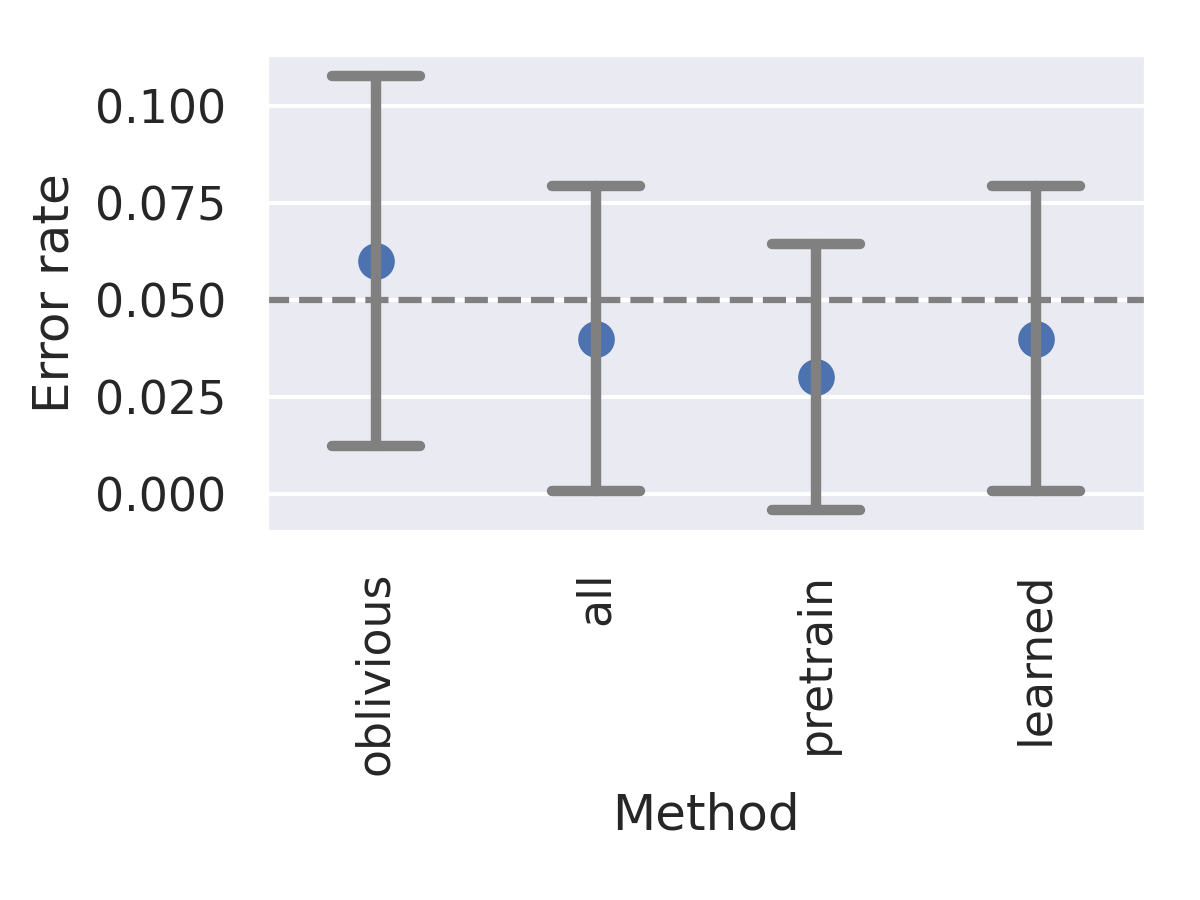}
    \caption{Average rate of safety violations $\widehat\beta_t$.}
        \label{fig:simulations-error}
\end{subfigure}\hfill \begin{subfigure}[b]{0.3\textwidth}
    \centering
    \includegraphics[width=\textwidth]{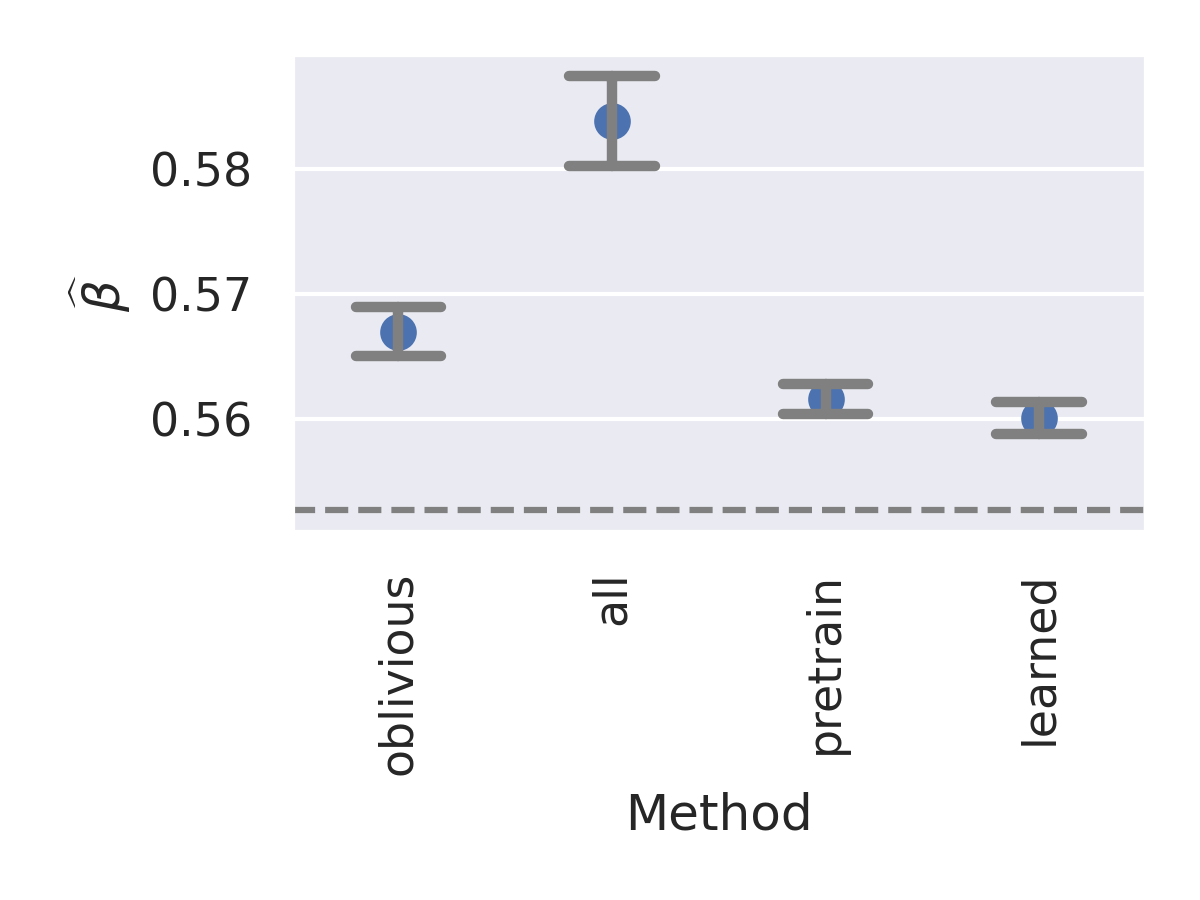}
    \caption{Average final value of $\widehat\beta_t$\\ (lower is better).}
        \label{fig:simulations-final}
\end{subfigure}\hfill \begin{subfigure}[b]{0.35\textwidth}
    \centering
    \includegraphics[width=\textwidth]{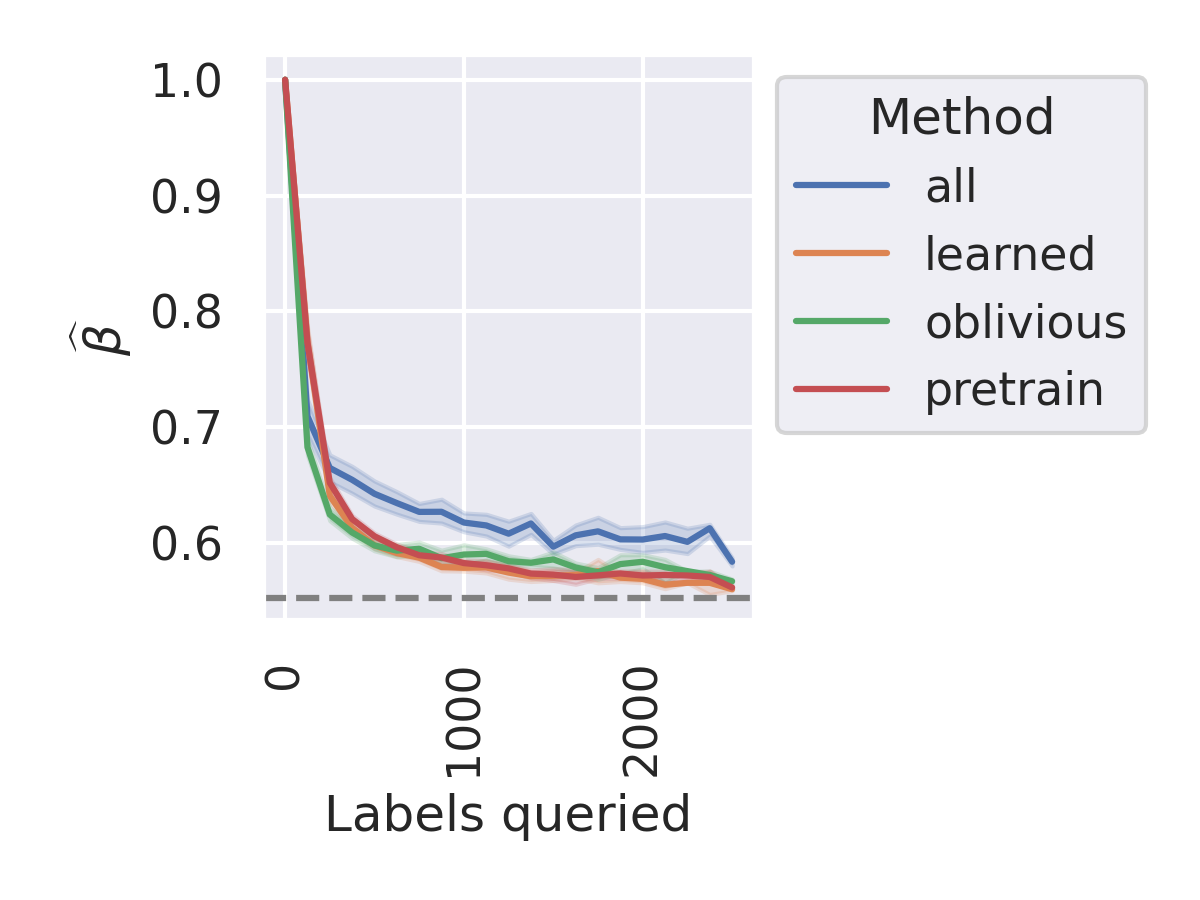}
    \caption{Average $\widehat\beta_t$ vs. labels queried\\ (lower is better).}
    \label{fig:simulations-label}
\end{subfigure}

\caption{Experimental results for different methods for our numerical simulation setup. We can see that ``pretrain'' and ``learned'' perform better by getting lower average $\widehat\beta_t$ uniformly across number of labels queried --- the dotted line in \Cref{fig:simulations-final,fig:simulations-label} is $\beta^*=0.5578$. Each method also has low safety violation rate, i.e., is below the dotted line of $\alpha=0.05$ in \Cref{fig:simulations-error}.}
    \label{fig:simulations}
\ifarxiv{}{\vspace{-10pt}}
\end{figure}
\subsection{Imagenet}

We also evaluate our methods on the Imagenet dataset \citep{deng2009imagenet}, and we used the pretrained neural network classifiers from \citet{bates_distribution-free_risk-controlling_2021a} to provide estimates of the class probabilities.

Since Imagenet is a classification task with label support on $\Ycal = [1000]$, our goal is to ensure that the miscoverage rate of the true class is controlled. We follow the same setup as descibed in the introduction, i.e., with our risk measure $r$ specified according to \eqref{eq:imagenet-risk}.
For Imagenet, we reshuffle our dataset for each trial, and run each method till we have queried 3000 labels.
\begin{figure}[h]

\begin{subfigure}[b]{0.3\textwidth}
    \includegraphics[width=\textwidth]{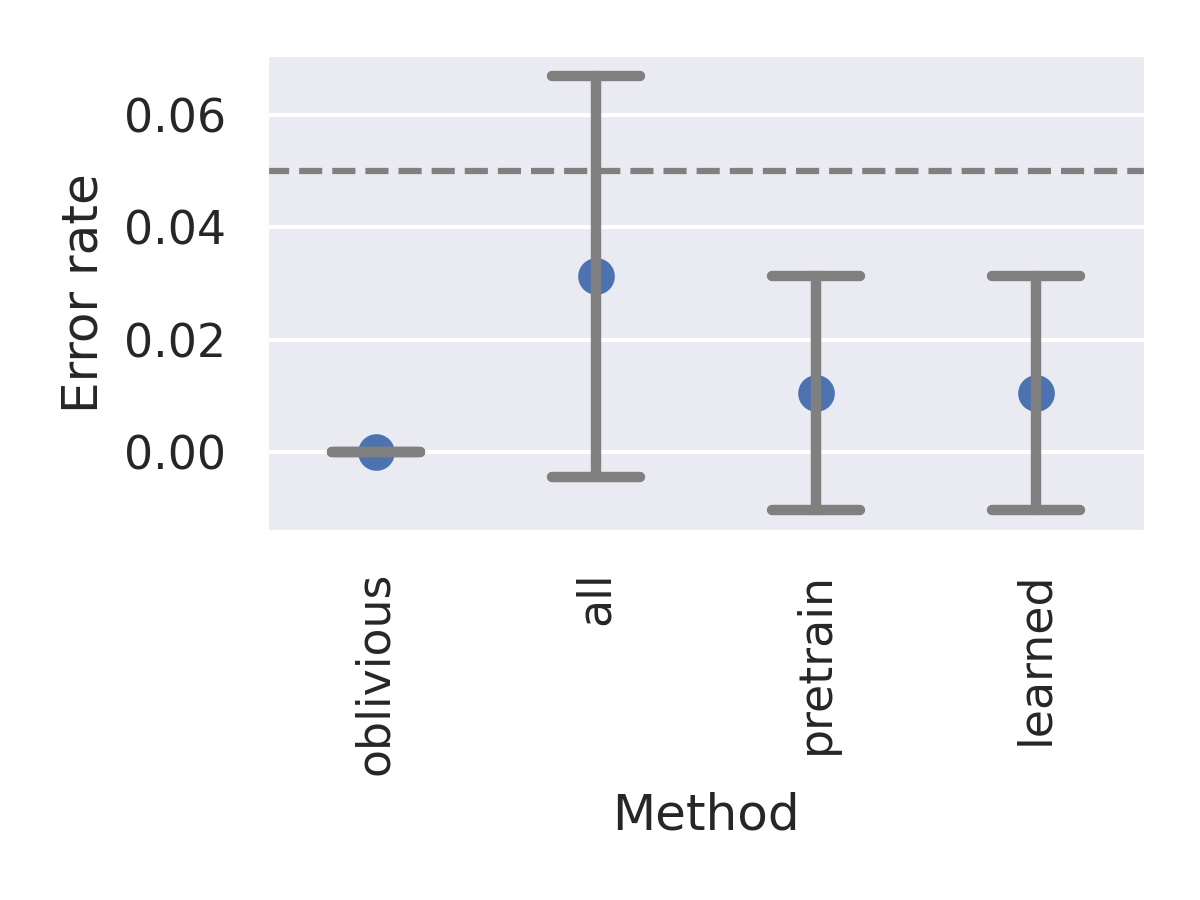}
     \caption{Average rate of safety violations $\widehat\beta_t$.}
         \label{fig:imagenet-error}
\end{subfigure}\hfill \begin{subfigure}[b]{0.3\textwidth}
    \includegraphics[width=\textwidth]{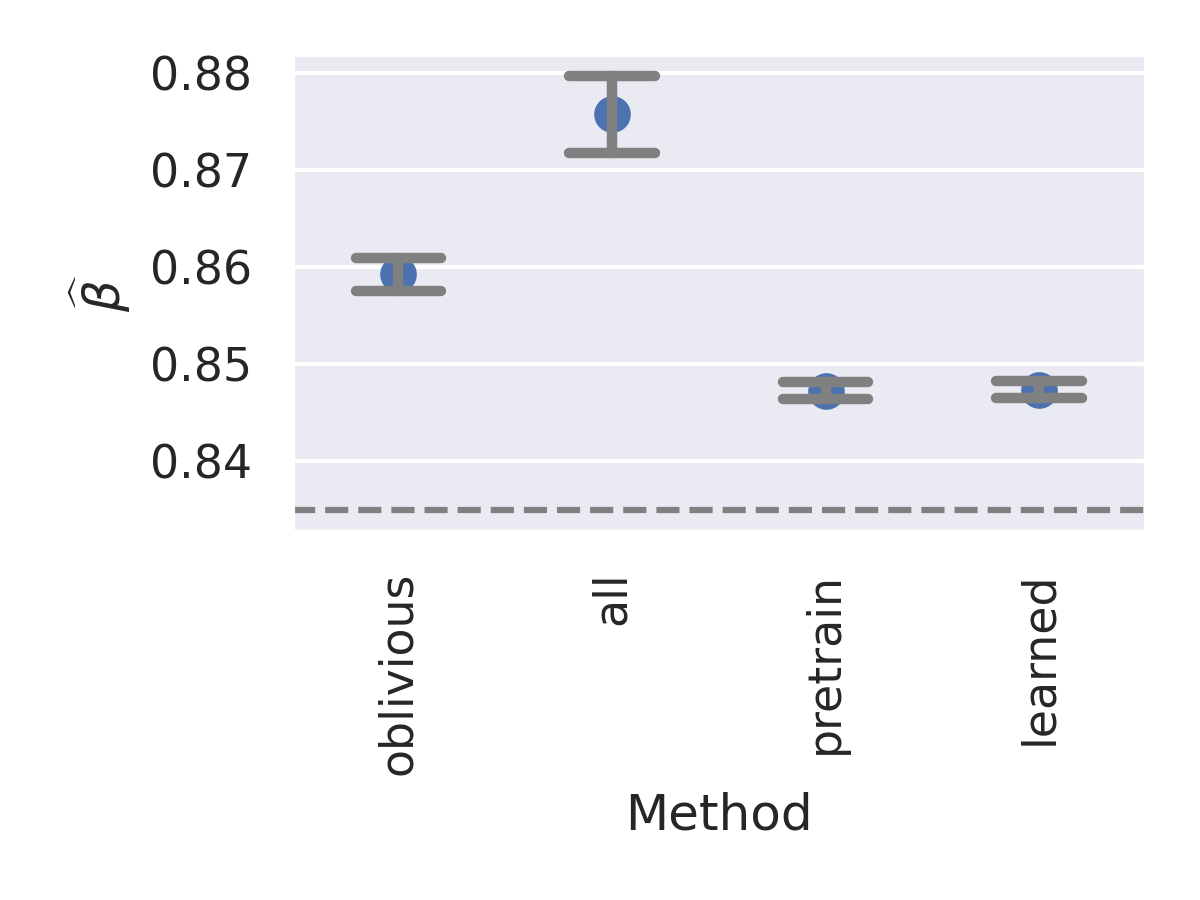}
    \caption{Average final value of $\widehat\beta_t$\\ (lower is better).}
    \label{fig:imagenet-final}
\end{subfigure}\hfill \begin{subfigure}[b]{0.33\textwidth}
    \includegraphics[width=\textwidth]{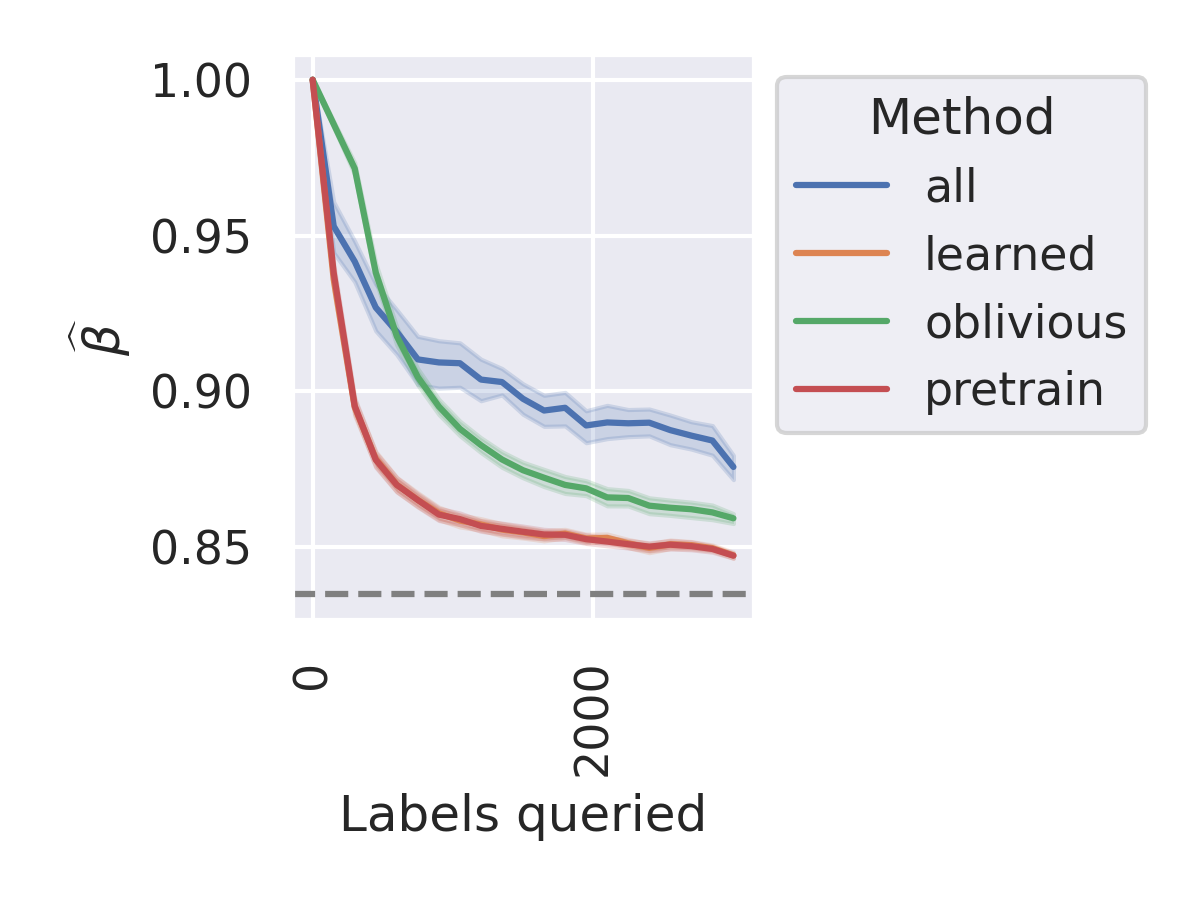}
    \caption{Average $\widehat\beta_t$ vs. labels queried\\ (lower is better).}
    \label{fig:imagenet-label}
\end{subfigure}
\caption{Experimental results for different methods on Imagenet. Again, we see that ``pretrain'' and ``learned'' are the best performing, and they have very similar performance and hence overlap in \Cref{fig:imagenet-label}. Here, $\beta^*=0.8349$, and is delineated by the dotted line in \Cref{fig:imagenet-final,fig:imagenet-label}. Again, each method also has low safety violation rate, i.e., is below the dotted line of $\alpha=0.05$ in \Cref{fig:imagenet-error}.}
    \label{fig:imagenet}
\end{figure}

In \Cref{fig:imagenet}, we plot the average $\widehat{\beta}_t$ across trials. Once again, we can see that the ``pretrain'' and ``learned'' methods outperform both the ``all'' and ``oblivious'' strategies here as well. On Imagenet the average safety violation rate is also controlled as well under the predetermined level of $\alpha=0.05$.
 \section{Omitted proofs}\label{sec:proofs}

Proofs that have been deferred from the main body of the paper are contained here.

\subsection{Proof of \Cref{prop:optimal-predictor}}
\label{sec:optimal-predictor-proof}
We can rewrite the objective in the following way:
\begin{align}
    \expect[\widehat{G}(q, \widehat{r}, \lambda)]=&\ \expect\left[\lambda\left( \theta - \widehat{r}(X, \beta) - \frac{L}{q(X)} \bar{R}(\beta)\right) - \lambda^2\left( \theta -\widehat{r}(X, \beta) - \frac{L}{q(X)} \bar{R}(\beta)\right)^2 \right]\\
    =\ &\lambda\left( \theta - \rho(\beta)\right) - \lambda^2\left((\theta - \rho(\beta))^2  +\Var\left[\widehat{r}(X, \beta) + \frac{L}{q(X)} \bar{R}(\beta)\right]\right)\label{eq:variance-lb}
\end{align}

Maximizing \eqref{eq:variance-lb} is the same as minimizing the following equivalent expressions:
\begin{align}
    &\Var\left[\widehat{r}(X, \beta) + \frac{L}{q(X)} \bar{R}(\beta)\right]\\
    =&\ \expect\left[\Var\left[\widehat{r}(X, \beta) + \frac{L}{q(X)} \bar{R}(\beta) \mid X\right]\right]
    +\Var\left[\expect\left[\widehat{r}(X, \beta) + \frac{L}{q(X)} \bar{R}(\beta) \mid X\right]\right]\\
    =&\ \expect\left[\Var\left[\frac{L}{q(X)} \bar{R}(\beta) \mid X\right]\right] + \Var[R(\beta) ] \\
=&\ \expect\left[\expect\left[\frac{\bar{R}(\beta)^2}{q(X)} \mid X\right] - \expect[\bar{R}(\beta) \mid X]^2\right] + \Var[R(\beta) ] \\
    =&\ \left(\int\limits \left(  \frac{\expect\left[\bar{R}(\beta)^2 \mid X = x\right]}{q(x)} - \expect[\bar{R}(\beta) \mid X = x]^2\right) p(x)\ dx\right) + \Var[R(\beta) ] \label{eq:expect-var},
\end{align} where we derive the 1st equality from the law of total variance, and the 2nd equality from the fact that $\widehat{r}(X, \beta)$ is fixed given $X$.

Since the only term that $\widehat{r}$ affects is the integral term in \eqref{eq:expect-var}, we can choose each $\widehat{r}(x, \beta)$ for each $x \in \Xcal$ to minimize the following:
\begin{align}
    &\frac{\expect\left[\bar{R}(\beta)^2 \mid X = x\right]}{q(x)} - \expect[\bar{R}(\beta) \mid X = x]^2\\
    =&\frac{\expect[(r(X, Y, \beta)  - \widehat{r}(x, \beta))^2 \mid X = x]}{q(x)} - (\expect[r(X, Y, \beta) \mid X = x] - \widehat{r}(x, \beta))^2\\
    =&\frac{\expect[r(X, Y, \beta)^2 \mid X = x]}{q(x)} - \expect[r(X, Y, \beta)\mid X = x]^2\\
     &\qquad - \left(\frac{1}{q(x)} - 1\right)(2 \expect[r(X, Y, \beta) \mid X = x]\widehat{r}(x, \beta) - \widehat{r}(x, \beta)^2)\label{eq:widehat-r-term}
\end{align}
If we remove the constants (i.e., terms unaffected by $\widehat{r}(x, \beta))$, and note that $q(x)^{-1} - 1 > 0$, we get that minimizing \eqref{eq:widehat-r-term} is equivalent to minimizing
\begin{align}
    2 \expect[r(X, Y, \beta) \mid X = x]\widehat{r}(x, \beta) - \widehat{r}(x, \beta)^2.
\end{align} This is equivalent to minimizng the squared error, i.e., $(\expect[r(X, Y, \beta) \mid X = x] - \widehat{r}(x, \beta))^2$, which means that $r^*(x, \beta) = \expect[r(X, Y, \beta) \mid X = x]$, which gets us our desired result.

\subsection{Proof of \Cref{prop:optimal-labeling-policy}}
\label{sec:optimal-labeling-policy-proof}
    Since we have shown that maximizing \eqref{eq:lb-opt-problem} is equivalent to minimizing \eqref{eq:expect-var}, we can isolate the terms that change wiht $q$ and see that we are looking for the solution to the following optimization problem:
\begin{align}
\min_q \int\limits \frac{p(x)}{q(x)}\expect[\bar{R}(\beta)^2 \mid X = x]\ dx\\
    \text{s.t. }\int p(x)q(x)\ dx \leq B.
\end{align}

We can define $\varphi(x) \coloneqq p(x)q(x)$ rewrite this as
\begin{align}
    \min_{\varphi} \int\limits \frac{p(x)^2}{\varphi(x)}\expect[\bar{R}(\beta)^2 \mid X_i = x]\ dx\\
    \text{s.t. }\int \varphi(x)\ dx \leq B.
\end{align}

Assume we can define a valid $q^*_\beta$ where $q^*_\beta(x) \in [0, 1]$ for each $x \in \Xcal$ that satisfies the following conditions:
\begin{align}
    \varphi(x) \propto p(x)  \sqrt{\expect\left[\bar{R}(\beta)^2 \mid X = x\right]}\\
    q^*_\beta(x) \propto \sqrt{\expect\left[\bar{R}(\beta)^2 \mid X = x\right]}.
\end{align} Explicitly, we define $q^*_\beta$ as follows:
\begin{align}
    q^*_\beta(x) =\frac{\sqrt{\expect\left[\bar{R}(\beta)^2 \mid X = x\right]}}{\expect\left[\sqrt{\expect\left[\bar{R}(\beta)^2 \mid X = x\right]} \right]} \cdot B.
\end{align}
One can show its optimality by considering some other labeling policy $q'$ where $\expect[q'(X)] = B$ (we use a similar proof technique from importance sampling \citet[\S~9.1]{owen_monte_carlo_2013}). Now, let $\varphi^*(x) \coloneqq p(x)q^*_\beta(x)$ and  $\varphi'(x) = p(x)q'(x)$
\begin{align}
    &\int\limits \frac{p(x)^2}{\varphi^*(x)} \expect[\bar{R}(\beta)^2 \mid X_i = x]\ dx\\
    =&\ \expect\left[\sqrt{\expect\left[\bar{R}(\beta)^2 \mid X = x\right]} \right]\int\limits \frac{p(x)}{B}\cdot \sqrt{\expect[\bar{R}(\beta)^2 \mid X_i = x]}\ dx\\
    =&\ \expect\left[\sqrt{\expect\left[\bar{R}(\beta)^2 \mid X = x\right]} \right]^2 \cdot B^{-1}\\
    =&\ \left(\int\limits \frac{p(x)}{\varphi'(x)} \cdot \varphi'(x)  \cdot \sqrt{\expect\left[\bar{R}(\beta)^2 \mid X = x\right]}\ dx\right)^2 \cdot B^{-1}\\
    =&\ B \cdot \left(\int\limits \frac{p(x)}{\varphi'(x)} \cdot \frac{\varphi'(x)}{B}  \cdot \sqrt{\expect\left[\bar{R}(\beta)^2 \mid X = x\right]}\ dx\right)^2\\
    \leq &\ \int\limits \frac{p(x)^2}{\varphi'(x)} \expect\left[\bar{R}(\beta)^2 \mid X = x\right]\ dx,
\end{align} where the last line is by Cauchy-Schwarz, since $\varphi'(x) / B$ is a valid p.d.f. Hence, we have shown our desired result.

\subsection{Proof of \Cref{thm:err-regret}}
\label{sec:err-regret-proof}
    By definition of $\textnormal{Reg}_t$, we know that
\begin{align}
    \sum\limits_{i = 1}^t\expect[\widehat{G}^\beta(q_i, \widehat{r}_i, \lambda^*)] - \expect[\widehat{G}^\beta(q_t, \widehat{r}_i, \lambda_i)] \leq \Reg_t
\end{align} by taking an expectation over $\filtration_{i - 1}$ for each term in the summation
Hence, what remains to be shown is the following:
\begin{align}
    &\sum\limits_{i = 1}^t\expect[\widehat{G}^\beta(q^*, r^*, \lambda^*)] - \expect[\widehat{G}^\beta(q_t, \widehat{r}_t, \lambda^*)]\\
    \leq &\sum\limits_{i = 1}^t O(\expect[|q_t(X_t) - q^*_\beta(X_t)|]) + O(\expect[(\widehat{r}_t(X_t, \beta) - r^*(X_t, \beta))^2]) \label{eq:q-r-regret}
\end{align}
Let $\bar{R}_t(\beta) \coloneqq r(X, Y, \beta) - \widehat{r}_t(X, \beta)$ and $\bar{R}^*(\beta) \coloneqq r(X, Y, \beta) - r^*(X, \beta)$. We first note the following identity using \eqref{eq:expect-var}:
\begin{align}
    &\expect[\widehat{G}^\beta(q^*, r^*, \lambda^*)] - \expect[\widehat{G}^\beta(q_t, \widehat{r}_t, \lambda^*)]\\
= &(\lambda^*)^2\left(\expect\left[\Var\left[\widehat{r}_t(X, \beta) - \frac{L}{q_t(X)} \bar{R}_t(\beta)\mid \filtration_{t - 1}\right]
- \Var\left[r^*(X, \beta) - \frac{L}{q^*_\beta(X)}\bar{R}^*(\beta)\right]\right]\right)
\end{align}
Now, we make the following derivations for the difference between the variance ($\Var$) terms:
\begin{align}
    &\Var\left[\widehat{r}_t(X, \beta) - \frac{L}{q_t(X)} \bar{R}_t(\beta)\mid \filtration_{t - 1}\right] - \Var\left[r^*(X, \beta) - \frac{L}{q^*_\beta(X)}\bar{R}^*(\beta)\right]\\
    =&\int\limits \left(\frac{\expect[\bar{R}_t(\beta)^2 \mid X = x]}{q_t(x)} - \frac{\expect[\bar{R}^*(\beta)^2 \mid X = x]}{q^*_\beta(x)} -  \expect[\bar{R}_t(\beta) \mid X = x]^2\right) \cdot p(x)\ dx\\
    =&\int\limits \left(\frac{(q^*_\beta(x) - q_t(x))\Var[r(X, Y, \beta)\mid X = x] + q^*_\beta(x)(1 - q_t(x))(\widehat{r}_t(x) - r^*(x, \beta))^2}{q_t(x)q^*_\beta(x)}\right) \cdot p(x)\ dx\\
    \leq&\int\limits \left((q^*_\beta(x) - q_t(x)) + (\widehat{r}_t(x) - r^*(x, \beta))^2\right) \cdot \frac{p(x)}{\varepsilon}\ dx\\
    \leq&O(\expect[\left|q^*_\beta(X_t) - q_t(X_t)\right| \mid \filtration_{t - 1}] + \expect[(\widehat{r}_t(X_t) - r^*(X_t, \beta))^2 \mid \filtration_{t - 1}]).\label{eq:big-o-err-bound}
\end{align}
The 1st equality is by substituting in the identity from \eqref{eq:expect-var}. The 1st inequality is a result of $\Var[r(X, Y, \beta) \mid X = x]\leq \frac{1}{4}$, since $r(X, Y, \beta) \in [0, 1]$, and $q^*_\beta(x), q_t(x) \in [\varepsilon, 1]$ almost surely. The 2nd inequality is by upper bounding $q^*_\beta(x) - q_t(x)$ by its absolute value.

Now, if we plug \eqref{eq:big-o-err-bound} into \eqref{eq:q-r-regret}, take the expectation over $\filtration_{t - 1}$, and take the summation over $t$, we get our desired result.
 \section{Additional related work}\label{sec:related-work}
\citet{casgrain_sequential_testing_2024} provide anytime-valid sequential tests for identifiable functions, which result in similar hypotheses being tested as this paper albeit with equality instead of equality. They, in addition to other recent work  \citep{podkopaev_sequential_kernelized_2023,podkopaev_sequential_predictive_2023,shekhar_nonparametric_two-sample_2024}, using regret bounds for betting-based e-processes to show either derivations for the growth rate of a betting strategy w.r.t.\ to the optimal growth rate. However, none of these settings incorporate the ability to perform adaptive sampling or inverse propensity weights. Prior work in anytime-valid inference have included inverse propensity weights have been for off policy evaluation \citep{waudby-smith_anytime-valid_off-policy_2024}, adaptive experimentation \citep{cook_semiparametric_efficient_2024}, or estimating the weighted mean of a finite population \citep{shekhar_risk-limiting_financial_2023}. However, none of these works explicitly characterize deviation in the sampling policy away from the optimal sampling policy ultimately affects the growth rate as we do in \Cref{thm:err-regret}.

Our analysis of power and regret for our algorithm is quite similar to methods in adaptive experimentation for average treatment effect estimation \citep{hahn_adaptive_experimental_2011,kato_efficient_adaptive_2021a} that attempt to derive a no regret treatment policy and outcome regressor that produces an estimator with a variance that approaches the variance of the optimal estimator. Unlike the adaptive experimentation setting, however, we have an additional label budget constraint on our formulation that results in a different optimal policy.
 \section{Conclusion, limitations, and future work}\label{sec:conclusion}
We have shown that we can extend the RCPS formulation to be anytime-valid, and retain validity and increase label efficiency in an active learning setting. We use the theory of betting and e-processes to develop this framework and show it is verifiably safe, and we verified this with our experimental results. We have primarily considered the i.i.d.\ setting here for anytime-valid calibration, and one key area in which one can extend this line of work is to account for distribution shift during test time. The empirical Bernstein supermartingales in \citet{waudby-smith_anytime-valid_off-policy_2024} can likely be used to extend our framework control risk in an average sense, but stronger guarantees could be made about the provided risk control if more realistic assumptions are made about the nature of the distribution (e.g., covariate shift, label shift, etc). It may also be possible extend a notion of adaptive conformal inference (ACI) \citep{gibbs_adaptive_conformal_2021,gibbs_conformal_inference_2023} to anytime-valid risk control. Another limitation of this work is the bounded label policy assumption (i.e., \Cref{assumption:bounded-var}) and existence assumption in \Cref{prop:optimal-labeling-policy}. We believe that more careful analysis can get rid of these assumptions in future work.
 \bibliographystyle{abbrvnat}
\bibliography{sample}

\begin{thebibliography}{37}
\providecommand{\natexlab}[1]{#1}
\providecommand{\url}[1]{\texttt{#1}}
\expandafter\ifx\csname urlstyle\endcsname\relax
  \providecommand{\doi}[1]{doi: #1}\else
  \providecommand{\doi}{doi: \begingroup \urlstyle{rm}\Url}\fi

\bibitem[Angelopoulos et~al.(2022)Angelopoulos, Kohli, Bates, Jordan, Malik,
  Alshaabi, Upadhyayula, and
  Romano]{angelopoulos_image-to-image_regression_2022}
A.~N. Angelopoulos, A.~P. Kohli, S.~Bates, M.~I. Jordan, J.~Malik, T.~Alshaabi,
  S.~Upadhyayula, and Y.~Romano.
\newblock Image-to-{{Image Regression}} with {{Distribution-Free Uncertainty
  Quantification}} and {{Applications}} in {{Imaging}}.
\newblock In \emph{International Conference on Machine Learning}, 2022.

\bibitem[Angelopoulos et~al.(2023)Angelopoulos, Bates, Fannjiang, Jordan, and
  Zrnic]{angelopoulos_prediction-powered_inference_2023a}
A.~N. Angelopoulos, S.~Bates, C.~Fannjiang, M.~I. Jordan, and T.~Zrnic.
\newblock Prediction-powered inference.
\newblock \emph{Science}, 382\penalty0 (6671):\penalty0 669--674, 2023.

\bibitem[Asmussen and Glynn(2007)]{asmussen_stochastic_simulation_2007}
S.~Asmussen and P.~W. Glynn.
\newblock \emph{Stochastic {{Simulation}}: {{Algorithms}} and {{Analysis}}}.
\newblock Stochastic {{Modelling}} and {{Applied Probability}}. Springer, New
  York, 1 edition, 2007.

\bibitem[Bates et~al.(2024)Bates, Angelopoulos, Lei, Malik, and
  Jordan]{bates_distribution-free_risk-controlling_2021a}
S.~Bates, A.~Angelopoulos, L.~Lei, J.~Malik, and M.~Jordan.
\newblock Distribution-free, {{Risk-controlling Prediction Sets}}.
\newblock \emph{Journal of the ACM}, 68\penalty0 (6):\penalty0 43:1--43:34,
  2024.

\bibitem[Breiman(1961)]{breiman_optimal_gambling_1961}
L.~Breiman.
\newblock Optimal {{Gambling Systems}} for {{Favorable Games}}.
\newblock \emph{Proceedings of the Fourth Berkeley Symposium on Mathematical
  Statistics and Probability, Volume 1: Contributions to the Theory of
  Statistics}, pages 65--79, 1961.

\bibitem[Casgrain et~al.(2024)Casgrain, Larsson, and
  Ziegel]{casgrain_sequential_testing_2024}
P.~Casgrain, M.~Larsson, and J.~Ziegel.
\newblock Sequential testing for elicitable functionals via supermartingales.
\newblock \emph{Bernoulli}, 30\penalty0 (2):\penalty0 1347--1374, 2024.

\bibitem[Cook et~al.(2024)Cook, Mishler, and
  Ramdas]{cook_semiparametric_efficient_2024}
T.~Cook, A.~Mishler, and A.~Ramdas.
\newblock Semiparametric {{Efficient Inference}} in {{Adaptive Experiments}}.
\newblock In \emph{{{Conference}} on {{Causal Learning}} and {{Reasoning}}},
  2024.

\bibitem[Cutkosky and Orabona(2018)]{cutkosky_blackbox_reductions_2018}
A.~Cutkosky and F.~Orabona.
\newblock Black-{{Box Reductions}} for {{Parameter-free Online Learning}} in
  {{Banach Spaces}}.
\newblock In \emph{{{Conference On Learning Theory}}}, 2018.

\bibitem[Deng et~al.(2009)Deng, Dong, Socher, Li, Li, and
  Fei-Fei]{deng2009imagenet}
J.~Deng, W.~Dong, R.~Socher, L.-J. Li, K.~Li, and L.~Fei-Fei.
\newblock Imagenet: A large-scale hierarchical image database.
\newblock In \emph{IEEE Conference on Computer Vision and Pattern Recognition},
  2009.

\bibitem[Flores et~al.(2021)Flores, Demsas, Leeper, and
  Ross]{flores_leveraging_machine_2021}
A.~M. Flores, F.~Demsas, N.~J. Leeper, and E.~G. Ross.
\newblock Leveraging {{Machine Learning}} and {{Artificial Intelligence}} to
  {{Improve Peripheral Artery Disease Detection}}, {{Treatment}}, and
  {{Outcomes}}.
\newblock \emph{Circulation Research}, 128\penalty0 (12):\penalty0 1833--1850,
  2021.

\bibitem[Gibbs and Candes(2021)]{gibbs_adaptive_conformal_2021}
I.~Gibbs and E.~Candes.
\newblock Adaptive {{Conformal Inference Under Distribution Shift}}.
\newblock In \emph{{{Neural Information Processing Systems}}}, 2021.

\bibitem[Gibbs and Cand{\`e}s(2024)]{gibbs_conformal_inference_2023}
I.~Gibbs and E.~Cand{\`e}s.
\newblock Conformal {{Inference}} for {{Online Prediction}} with {{Arbitrary
  Distribution Shifts}}.
\newblock \emph{Journal of Machine Learning Research}, 25\penalty0
  (162):\penalty0 1--36, 2024.

\bibitem[Gr{\"u}nwald et~al.(2024)Gr{\"u}nwald, {de Heide}, and
  Koolen]{grunwald_safe_testing_2024}
P.~Gr{\"u}nwald, R.~{de Heide}, and W.~Koolen.
\newblock Safe {{Testing}}.
\newblock \emph{Journal of the Royal Statistical Society: Series B (Statistical
  Methodology)}, 2024.

\bibitem[Hahn et~al.(2011)Hahn, Hirano, and
  Karlan]{hahn_adaptive_experimental_2011}
J.~Hahn, K.~Hirano, and D.~Karlan.
\newblock Adaptive {{Experimental Design Using}} the {{Propensity Score}}.
\newblock \emph{Journal of Business \& Economic Statistics}, 29\penalty0
  (1):\penalty0 96--108, 2011.

\bibitem[Hanu and {Unitary team}(2020)]{Detoxify}
L.~Hanu and {Unitary team}.
\newblock Detoxify.
\newblock Github. https://github.com/unitaryai/detoxify, 2020.

\bibitem[Huang et~al.(2024)Huang, Sharma, Loquercio, Angelopoulos, Goldberg,
  and Malik]{huang_conformal_policy_2023}
H.~Huang, S.~Sharma, A.~Loquercio, A.~Angelopoulos, K.~Goldberg, and J.~Malik.
\newblock Conformal {{Policy Learning}} for {{Sensorimotor Control Under
  Distribution Shifts}}.
\newblock In \emph{IEEE International Conference on Robotics and Automation},
  2024.

\bibitem[Kato et~al.(2021)Kato, Ishihara, Honda, and
  Narita]{kato_efficient_adaptive_2021a}
M.~Kato, T.~Ishihara, J.~Honda, and Y.~Narita.
\newblock Efficient {{Adaptive Experimental Design}} for {{Average Treatment
  Effect Estimation}}.
\newblock arXiv:2002.05308, 2021.

\bibitem[Kelly(1956)]{kelly_new_interpretation_1956}
J.~L. Kelly.
\newblock A {{New Interpretation}} of {{Information Rate}}.
\newblock \emph{The Bell System Technical Journal}, page~10, 1956.

\bibitem[Larsson et~al.(2024)Larsson, Ramdas, and
  Ruf]{larsson_numeraire_e-variable_2024}
M.~Larsson, A.~Ramdas, and J.~Ruf.
\newblock The numeraire e-variable and reverse information projection.
\newblock arXiv:2402.18810, 2024.

\bibitem[Lekeufack et~al.(2024)Lekeufack, Angelopoulos, Bajcsy, Jordan, and
  Malik]{lekeufack_conformal_decision_2024}
J.~Lekeufack, A.~N. Angelopoulos, A.~Bajcsy, M.~I. Jordan, and J.~Malik.
\newblock Conformal {{Decision Theory}}: {{Safe Autonomous Decisions}} from
  {{Imperfect Predictions}}.
\newblock In \emph{IEEE International Conference on Robotics and Automation},
  2024.

\bibitem[Markov et~al.(2023)Markov, Zhang, Agarwal, Nekoul, Lee, Adler, Jiang,
  and Weng]{markov_holistic_approach_2023}
T.~Markov, C.~Zhang, S.~Agarwal, F.~E. Nekoul, T.~Lee, S.~Adler, A.~Jiang, and
  L.~Weng.
\newblock A {{Holistic Approach}} to {{Undesired Content Detection}} in the
  {{Real World}}.
\newblock \emph{AAAI Conference on Artificial Intelligence}, 2023.

\bibitem[Orabona and Tommasi(2017)]{orabona_training_deep_2017}
F.~Orabona and T.~Tommasi.
\newblock Training {{Deep Networks}} without {{Learning Rates Through Coin
  Betting}}.
\newblock In \emph{{{Neural Information Processing Systems}}}, 2017.

\bibitem[Owen(2013)]{owen_monte_carlo_2013}
A.~B. Owen.
\newblock \emph{Monte Carlo theory, methods and examples}.
\newblock \url{https://artowen.su.domains/mc/}, 2013.

\bibitem[Podkopaev and Ramdas(2023)]{podkopaev_sequential_predictive_2023}
A.~Podkopaev and A.~Ramdas.
\newblock Sequential {{Predictive Two-Sample}} and {{Independence Testing}}.
\newblock In \emph{Neural {{Information Processing Systems}}}, 2023.

\bibitem[Podkopaev et~al.(2023)Podkopaev, Bl{\"o}baum, Kasiviswanathan, and
  Ramdas]{podkopaev_sequential_kernelized_2023}
A.~Podkopaev, P.~Bl{\"o}baum, S.~Kasiviswanathan, and A.~Ramdas.
\newblock Sequential {{Kernelized Independence Testing}}.
\newblock In \emph{{{International Conference}} on {{Machine Learning}}}, 2023.

\bibitem[Quach et~al.(2024)Quach, Fisch, Schuster, Yala, Sohn, Jaakkola, and
  Barzilay]{quach_conformal_language_2023}
V.~Quach, A.~Fisch, T.~Schuster, A.~Yala, J.~H. Sohn, T.~S. Jaakkola, and
  R.~Barzilay.
\newblock Conformal {{Language Modeling}}.
\newblock In \emph{International Conference on Learning Representations}, 2024.

\bibitem[Ramdas et~al.(2022)Ramdas, Ruf, Larsson, and
  Koolen]{ramdas_admissible_anytime-valid_2020}
A.~Ramdas, J.~Ruf, M.~Larsson, and W.~Koolen.
\newblock Admissible anytime-valid sequential inference must rely on
  nonnegative martingales.
\newblock arXiv:2009.03167, 2022.

\bibitem[Ramdas et~al.(2023)Ramdas, Gr{\"u}nwald, Vovk, and
  Shafer]{ramdas_game-theoretic_statistics_2023}
A.~Ramdas, P.~Gr{\"u}nwald, V.~Vovk, and G.~Shafer.
\newblock Game-{{Theoretic Statistics}} and {{Safe Anytime-Valid Inference}}.
\newblock \emph{Statistical Science}, 38\penalty0 (4):\penalty0 576--601, 2023.

\bibitem[Saba et~al.(2021)Saba, Sanagala, Gupta, Koppula, Johri, Khanna,
  Mavrogeni, Laird, Pareek, Miner, Sfikakis, Protogerou, Misra, Agarwal,
  Sharma, Viswanathan, Rathore, Turk, Kolluri, Viskovic, {Cuadrado-Godia},
  Kitas, Sharma, Nicolaides, and Suri]{saba_multimodality_carotid_2021}
L.~Saba, S.~S. Sanagala, S.~K. Gupta, V.~K. Koppula, A.~M. Johri, N.~N. Khanna,
  S.~Mavrogeni, J.~R. Laird, G.~Pareek, M.~Miner, P.~P. Sfikakis,
  A.~Protogerou, D.~P. Misra, V.~Agarwal, A.~M. Sharma, V.~Viswanathan, V.~S.
  Rathore, M.~Turk, R.~Kolluri, K.~Viskovic, E.~{Cuadrado-Godia}, G.~D. Kitas,
  N.~Sharma, A.~Nicolaides, and J.~S. Suri.
\newblock Multimodality carotid plaque tissue characterization and
  classification in the artificial intelligence paradigm: A narrative review
  for stroke application.
\newblock \emph{Annals of Translational Medicine}, 9\penalty0 (14):\penalty0
  1206, 2021.

\bibitem[Shafer(2021)]{shafer_testing_betting_2021}
G.~Shafer.
\newblock Testing by betting: {{A}} strategy for statistical and scientific
  communication.
\newblock \emph{Journal of the Royal Statistical Society: Series A (Statistics
  in Society)}, 184\penalty0 (2):\penalty0 407--431, 2021.

\bibitem[Shekhar and Ramdas(2024)]{shekhar_nonparametric_two-sample_2024}
S.~Shekhar and A.~Ramdas.
\newblock Nonparametric {{Two-Sample Testing}} by {{Betting}}.
\newblock \emph{IEEE Transactions on Information Theory}, 70\penalty0
  (2):\penalty0 1178--1203, 2024.

\bibitem[Shekhar et~al.(2023)Shekhar, Xu, Lipton, Liang, and
  Ramdas]{shekhar_risk-limiting_financial_2023}
S.~Shekhar, Z.~Xu, Z.~Lipton, P.~Liang, and A.~Ramdas.
\newblock Risk-limiting financial audits via weighted sampling without
  replacement.
\newblock In \emph{{{Conference}} on {{Uncertainty}} in {{Artificial
  Intelligence}}}, 2023.

\bibitem[Ville(1939)]{ville_etude_critique_1939}
J.~Ville.
\newblock \emph{{{\'E}tude Critique de la Notion de Collectif}}.
\newblock {PhD} thesis, University of Paris, Paris, 1939.

\bibitem[{Waudby-Smith} and Ramdas(2023)]{waudby2020estimating}
I.~{Waudby-Smith} and A.~Ramdas.
\newblock Estimating means of bounded random variables by betting.
\newblock \emph{Journal of the Royal Statistical Society Series B (Statistical
  Methodology)}, 2023.

\bibitem[{Waudby-Smith} et~al.(2024){Waudby-Smith}, Wu, Ramdas, Karampatziakis,
  and Mineiro]{waudby-smith_anytime-valid_off-policy_2024}
I.~{Waudby-Smith}, L.~Wu, A.~Ramdas, N.~Karampatziakis, and P.~Mineiro.
\newblock Anytime-valid off-policy inference for contextual bandits.
\newblock \emph{ACM / IMS Journal of Data Science}, 2024.

\bibitem[Wu et~al.(2019)Wu, Xin, Yang, Sun, Xu, Zheng, and
  Yuan]{wu_deep_morphology_2019}
J.~Wu, J.~Xin, X.~Yang, J.~Sun, D.~Xu, N.~Zheng, and C.~Yuan.
\newblock Deep morphology aided diagnosis network for segmentation of carotid
  artery vessel wall and diagnosis of carotid atherosclerosis on black-blood
  vessel wall {{MRI}}.
\newblock \emph{Medical Physics}, 46\penalty0 (12):\penalty0 5544--5561, 2019.

\bibitem[Zrnic and Cand{\`e}s(2024)]{zrnic_active_statistical_2024}
T.~Zrnic and E.~J. Cand{\`e}s.
\newblock Active {{Statistical Inference}}.
\newblock In \emph{International {{Conference}} on {{Machine Learning}}}, 2024.

\end{thebibliography}
\appendix

\section{Experiment details}
\label{sec:simulation-details}

In this section, we discuss additional details about how we implement our methods described in \Cref{sec:experiments}.

\subsection{Formulation of the labeling policy}
For the ``pretrain'' policy, we use an estimate of the conditional mean and variance derived from $s$.
\begin{align}
    \widehat{r}^{\pretr}(x, \beta) &\coloneqq \sum\limits_{y \not\in C(x, \beta)} s^y(x),\\
    \widehat{\sigma}^{\pretr}(x, \beta) &\coloneqq \sqrt{\widehat{r}^{\pretr}(x, \beta) \cdot (1 - \widehat{r}^{\pretr}(x, \beta))}.
\end{align}

These estimates may not be accurate, but might still represent a reasonable partitioning of the feature space where $\sigma(x, \beta)$ and $r^*(x, \beta)$ are similar. Hence, for $\widehat{\sigma}^{\plugin}_t$ and $\widehat{r}^{\plugin}_t$, we model them as linear regresssion models where inputs are a binning of $\widehat{r}^{\pretr}(x, \beta)$ and $\widehat{\sigma}^{\pretr}(x, \beta)$, respectively. We then learn the regression model parameters on training data.

\subsection{Optimization to maintain the budget constraint}
For any predictor $\widehat{\sigma}$, we optimize the Lagrangian corresponding to \eqref{eq:lb-opt-problem}, which is defined as follows for a fixed $\lambda_t$.

\begin{align}
    &\mathcal{L}(\nu_t, q_t)\\
    &= \expect[\widehat{G}(q_t, \widehat{r}_t, \lambda_t)] - \nu_t(\expect[q_t(X_t)] - B) \\
    &=\expect\left[\lambda_t\left(\theta - \frac{L_t}{q_t(X)}r(X, Y, \beta)\right) - \psi(\lambda_t)\left(\theta - \frac{L_t}{q_t(X)}r(X, Y, \beta)\right)^2\right] - \nu_t(\expect[q_t(X)] - B)\\
    &=\lambda_t(\theta - \rho(\beta)) - \lambda_t^2\expect\left[\left(\theta - \widehat{r}_t(X, \beta)- \frac{L_t}{q_t(X)}\bar{R}(\beta)\right)^2\right] - \nu_t(\expect[q_t(X)] - B)
\end{align}

Since we know the optimal form of $\widehat{r}_t$, we optimize it separately by taking an optimization step with the loss of squared error, $(r(X, Y, \beta) - \widehat{r}_t(X, \beta))^2$, for each labeled example for a grid of $\beta$ values.

To derive the solution, we simplify playing the minimax game with the above Lagrangian to the following objective:

\begin{align}
\max_{q_t} \min_{\nu_t}\ &-\expect\left[\frac{1}{q_t(X)} \cdot (r(X, Y, \beta) - \widehat{r}_t(X, \beta))^2\right] - \nu_t(\expect[q_t(X)] - B)
\end{align}
In the case of both ``pretrain'' and ``learned'' methods, we parameterize our $q_t$ in the following fashion:
\begin{align}
    q_t(x) = \frac{\widehat{\sigma}_t(x, \widehat{\beta}_{t - 1})}{\exp(c_t)},
\end{align} where our normalization constant $C_t = \exp(c_t)$ for some value $c_t \in \reals$ to ensure it is nonnegative.

$\widehat{\sigma}_t$ is updated separately. For ``pretrain'', it is fixed from the beginning, and for ``learned'', we take an optimization step to minimize the squared loss against the squared residual, i.e., we update to minimize $((r(X, Y, \beta) - \widehat{r}_t(X, \beta))^2 - \widehat{\sigma}_t(X, \beta)^2)^2$.

Hence, the only thing that remains to optimize $c_t$, which now simply means we need to solve the following problem (where we treat $\widehat{\beta}_{t - 1}$ as fixed):
\begin{align}
\max_{c_t} \min_{\nu_t}\ & -c_t + \nu_t\left(\frac{1}{\exp(c_t)}\expect[\widehat{\sigma}_t(X, \widehat{\beta}_{t - 1})] - B\right)
\end{align}

The actual game payoff we play is the stochastic approximation of the Lagrangian in the following form:
\begin{align}
    L(\nu_t, c_t) = -c_t - \nu_t\left(\frac{\widehat{\sigma}_t(X_t, \widehat{\beta}_{t - 1})}{\exp(c_t)} - B\right).
\end{align}
At each step, we take an optimization step on $c_t$ towards maximizing the above loss, and determine $\nu_t$ by playing either best response or a windowed best response that takes an average of best responses over recent rounds. We use the COCOB optimizer \citep{orabona_training_deep_2017} for all of learning and optimization which requires no hyperparameter or learning rate selection.
 \end{document}